\documentclass{article}
\usepackage{fullpage}

\usepackage{amsmath, amssymb,bm}
\usepackage{hyperref}
\usepackage{mathtools}
\usepackage{amsthm}
\usepackage{cite}
\usepackage{microtype}
\usepackage{enumitem}
\usepackage{mathrsfs}
\usepackage{xcolor}
\usepackage{standalone}
\usepackage{tikz}
\usepackage{subcaption}
\usepackage{pgfplots}




\usetikzlibrary{positioning,decorations.pathreplacing,shapes}
\usetikzlibrary{arrows.meta}

\theoremstyle{definition}
\newtheorem{theorem}{Theorem}
\newtheorem{cor}[theorem]{Corollary}
\newtheorem{lemma}[theorem]{Lemma}

\newtheorem{definition}{Definition}
\newtheorem{remark}{Remark}

\newtheorem{condition}{Condition}

\newtheorem{example}{Example}

\definecolor{LineBlue}{rgb}{.2 .6 .9}
\definecolor{LineRed}{rgb}{.9 .3 .3}
\definecolor{LineGreen}{rgb}{.3 .8 .3}
\definecolor{LinePurple}{rgb}{.8 .3 .8}

%




\newcommand{\cB}{\mathcal{B}}

\newcommand{\cL}{\mathcal{L}}

\newcommand{\cU}{\mathcal{U}}

\newcommand{\cW}{\mathcal{W}}
\newcommand{\cX}{\mathcal{X}}

\newcommand{\bbR}{\mathbb{R}}
\newcommand{\bbN}{\mathbb{N}}


\newcommand{\sD}{\mathsf{D}}

\newcommand{\one}{\boldsymbol{1}} 
 
\newcommand{\Id}{\mathrm{I}}
\newcommand{\Comm}{\mathrm{K}}

\newcommand{\E}{\ensuremath{\mathbb{E}}}
\newcommand{\bPr}{\ensuremath{\mathbb{P}}}
\DeclarePairedDelimiter\bkt{[}{]}     
\makeatletter
\newcommand{\@exstar}[1]{\E \bkt*{#1}}
\newcommand{\@exnostar}[2][]{\E \bkt[#1]{#2}}
\newcommand{\ex}{\@ifstar\@exstar\@exnostar}
\makeatother

\makeatletter
\newcommand{\@prstar}[1]{\bPr \bkt*{#1}}
\newcommand{\@prnostar}[2][]{\bPr \bkt[#1]{#2}}
\newcommand{\pr}{\@ifstar\@prstar\@prnostar}
\makeatother

\DeclareMathOperator{\cov}{\mathsf{Cov}}

\DeclareMathOperator{\var}{\sf Var}

\DeclareMathOperator{\Lip}{Lip}

\newcommand{\dd}{\mathrm{d}}

\DeclareMathOperator{\gtr}{tr}
\DeclareMathOperator{\gvec}{\mathsf{vec}}


\DeclarePairedDelimiter{\norm}{\lVert}{\rVert}

\makeatletter
\newcommand{\@normopstar}[1]{\norm*{#1}_\mathrm{op}}
\newcommand{\@normopnostar}[2][]{\norm[#1]{#2}_{\mathrm{op}}}
\newcommand{\normop}{\@ifstar\@normopstar\@normopnostar}
\makeatother



\newcommand{\normal}{\mathsf{N}}



\pgfplotsset{compat=newest} 

\let\originalleft\left
\let\originalright\right
\renewcommand{\left}{\mathopen{}\mathclose\bgroup\originalleft}
\renewcommand{\right}{\aftergroup\egroup\originalright}


\title{Dimension-Free  Bounds for Generalized First-Order Methods\\ via Gaussian Coupling}

\author{Galen Reeves}
 
\begin{document}

\maketitle

\begin{abstract}
We establish non-asymptotic bounds on the finite-sample behavior of generalized first-order iterative algorithms --- including  gradient-based optimization methods and approximate message passing (AMP) --- with Gaussian data matrices and full-memory, non-separable nonlinearities. The central result constructs an explicit coupling between the iterates of a generalized first-order method and a conditionally Gaussian process whose covariance evolves deterministically via a finite-dimensional state evolution recursion. This coupling yields tight, dimension-free bounds under mild Lipschitz and moment-matching conditions. Our analysis departs from classical inductive AMP proofs by employing a direct comparison between the generalized first-order method and the conditionally Gaussian comparison process.  This approach provides a unified derivation of AMP theory for Gaussian matrices without relying on separability or asymptotics. A complementary lower bound on the Wasserstein distance demonstrates the sharpness of our upper bounds. 
\end{abstract}

\setcounter{tocdepth}{2} 
\tableofcontents

\section{Introduction }
Iterative algorithms play a central role in modern high-dimensional statistics and optimization.  A broad class of such algorithms  --- including gradient-based optimization methods and approximate message passing (AMP) --- can be represented (after symmetrization) as a generalized first-order method of the form:
\begin{align}
x_t = A  f_t(x_1, \dots, x_{t-1}  )   + g_t(x_1, \dots, x_{t-1}).   \label{eq:xt}
\end{align}
Here, each $x_t \in \bbR^n$ is an iterate in a discrete-time sequence, $A \in \bbR^{n \times n}$ is a symmetric matrix,  and $f_t, g_t \colon (\bbR^n)^{t-1} \to \bbR^n$ are  functions that depend on the previous iterates. This recursion  encompasses many widely used algorithms and allows for non-separable, full-memory nonlinearities.  

The statistical behavior  in the presence of random data has attracted significant attention. A major contribution of the AMP framework~\cite{donoho:2009a,bayati:2011}  has been to provide an explicit connection between the  dynamics  of \eqref{eq:xt} and  a limiting Gaussian process whose mean and variance evolve deterministically according to the state evolution formalism. This connection has been instrumental for the design and analysis of algorithms for applications in regression \cite{donoho:2009a,bayati:2011,rangan:2011,bayati:2012,javanmard:2013a,rangan:2019,tan:2024pooled}, coding\cite{rush:2017aa},  matrix factorization\cite{fletcher:2018,parker:2014b, deshpande:2014, lesieur:2017, Montanari:2021a,behne:2022,barbier:2023,rossetti:2023_amp_mtp,rossetti:2024isit,rossetti:2024_opamp}, and sampling \cite{el-alaoui:2022,montanari:2024posterior,huang:2024sampling,cui:2024_sampling}.  It also served as a powerful proof technique for understanding the structural properties of optimization problems---independent of the specific algorithm used. More recently, these insights have been extended to  broader classes of generalized first-order methods \cite{celentano:2020estimation} and  continuous-time analogs \cite{celentano:2021highdimensional,gerbelot:2024rigorous,montanari:2024statistically,fan:2025dynamical}, enabling rigorous formulations of dynamical mean-field theory. 

However, existing theoretical tools often impose restrictive assumptions such as separability or asymptotic limits. In this work, we depart from such constraints and present a new, non-asymptotic coupling framework that applies broadly to generalized first-order methods with Gaussian data matrices. 
This is achieved by comparing the behavior of such systems over a fixed number of time steps $T \ll n$  to a conditionally Gaussian process of the form:
\begin{align}
y_t = m_t(y_1, \dots, y_{t-1} ) + w_t \label{eq:yt} 
\end{align}
where $w_t \in \bbR^n$ is a mean-zero Gaussian process with covariance $\cov(w_s, w_t) = \Sigma_{st} \Id_n$ and the functions $m_t \colon  (\bbR^n)^{t-1} \to \bbR^n$ are constructed to match moments with the original system. In contrast to the original process, the conditional covariance of $y_t$ given its history is independent of past states, allowing for simplified analysis.

We analyze the behavior of system \eqref{eq:xt} when applied to Gaussian random matrices. For concreteness, our results are stated with respect to the  Gaussian orthogonal ensemble $\mathsf{GOE}(n)$ where  $A$ is a symmetric matrix with independent $\normal(0,2/n)$ entries on the diagonal and independent $\normal(0,1/n)$ entries below the diagonal. Following the arguments in  \cite{rossetti:2024isit,rossetti:2024_opamp}, results established for the GOE extend naturally to a broader class of matrix-variate Gaussian distributions.   In particular, all of our results can be applied to recursions defined on asymmetric Gaussian matrices with correlated rows and columns.

Our main results provide a constructive coupling between the original sequence $(x_t)$ and the approximation  $(y_t	)$, together with dimension-free bounds on the coupling error. Under regularity assumptions on the functions $f_t$ and $g_t$ and a suitable moment-matching condition, we show that the total error between the two processes concentrates at a scale that is independent of the ambient dimension.  A special case of our main results is summarized in the following theorem: 

\begin{theorem}[Dimension-Free Upper Bound]\label{thm:intro} 
Assume that $A\sim \mathsf{GOE}(n)$ and the functions  $f_t$ and $g_t$ are $L$-Lipschitz continuous for all $t \in [T]$. Let the parameters $(m_t)_{t \in T}$ and $\Sigma = (\Sigma_{st})_{s,t \in T}$ be matched to \eqref{eq:xt} according to the recursive  construction in  Definition~\ref{def:SE} and assume that $\Sigma$ is positive definite. Then, there exists a coupling of \eqref{eq:xt} and \eqref{eq:yt} such that
\begin{align*}
\pr[\Big]{ \max_{t \in [T]} \norm{ x_t - y_t} \ge C   \sqrt{r}} \le 2 \, e^{ -  r} , \qquad \text{for all $0 \le r \le n$,}
\end{align*}
where $C$ is a positive constant that depends only on the number of time steps $T$, the Lipschitz constant $L$, and the largest and smallest eigenvalues of the covariance matrix $\Sigma$.
\end{theorem}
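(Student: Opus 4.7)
The plan is to construct an explicit coupling between \eqref{eq:xt} and \eqref{eq:yt} via a Bolthausen-style conditioning decomposition of $A \sim \mathsf{GOE}(n)$, and then to deduce the dimension-free sub-Gaussian tail from Gaussian Lipschitz concentration applied to the resulting coupling map. Concretely, for any $A$-adapted sequence $f_1, \ldots, f_t \in \bbR^n$, the conditional law of $A f_t$ given $(A f_1, \ldots, A f_{t-1}, f_1, \ldots, f_t)$ is Gaussian with mean $\sum_{s < t} c_{t,s}(G)\, A f_s$ plus a symmetry correction, and covariance $\beta_t(G)^2\, P_t^{\perp}$, where $G = (\langle f_s, f_t\rangle/n)_{s,t\in[T]}$ is the random Gram matrix, the scalars $c_{t,s}(G), \beta_t(G)$ come from a Cholesky-type factorization, and $P_t^{\perp}$ is the orthogonal projector onto the complement of $\mathrm{span}(f_1, \ldots, f_{t-1})$. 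On a common probability space, I would introduce an auxiliary family of standard Gaussian vectors $(z_t)_{t \in [T]}$ serving as shared innovations: the iterates $x_t$ receive their innovation from $z_t$ through an $f$-dependent isometry, while the comparison iterates $y_t$ are constructed from the \emph{same} $z_t$ using the deterministic coefficients $c_{t,s}(\Sigma), \beta_t(\Sigma)$ in place of their random counterparts.

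With this coupling in hand, the proof proceeds in four steps. First, I would verify, using the recursive construction of $(m_t, \Sigma)$ in Definition~\ref{def:SE}, that the $y_t$ so constructed does have the correct Gaussian law with covariance $\Sigma_{st}\Id_n$. Second, I would expand $x_t - y_t$ into (a) the Lipschitz propagation of past errors $(x_s - y_s)_{s < t}$ through $f_t, g_t, m_t$, and (b) a ``coefficient error'' governed by $\|G - \Sigma\|$ through the smooth maps $c_{t,s}$ and $\beta_t$. Third, I would argue that the entire coupling map $(A, z_1, \ldots, z_T) \mapsto \max_{t \in [T]} \|x_t - y_t\|$ is Lipschitz in the Euclidean metric on its Gaussian inputs, with Lipschitz constant depending only on $T$, $L$, and the extremal eigenvalues of $\Sigma$. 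Fourth, I would invoke the Gaussian Lipschitz concentration inequality (Borell--TIS) applied to the joint Gaussian $(A, z_1, \ldots, z_T)$ to conclude the sub-Gaussian tail $2e^{-r}$, after bounding $\E[\max_t \|x_t - y_t\|]$ by a dimension-free constant via the same Lipschitz mechanism applied to $\|G - \Sigma\|$.

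The principal obstacle is controlling the Lipschitz constant of the Cholesky-type maps $G \mapsto (c_{t,s}(G), \beta_t(G))$ uniformly over the relevant event: these maps are only smoothly invertible away from the PSD boundary, which is precisely where the smallest eigenvalue of $\Sigma$ enters the constant $C$. Handling the exceptional event on which $G$ drifts close to the boundary requires either a truncation argument or an a priori concentration bound showing that $\|G - \Sigma\|$ remains $O(1)$ with the desired sub-Gaussian tail. A related subtlety is that the natural Gaussian concentration of $G - \Sigma$ operates at scale $1/\sqrt{n}$ on $O(\sqrt{n})$-norm vectors, so one must arrange these scalings to cancel exactly, and the compounding over $T$ steps must not introduce any hidden $n$-dependence. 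Since $T$ is fixed, an $\exp(T)$ or polynomial blow-up in the constant is acceptable and absorbed into $C$, but the inductive accounting needed to verify this is the most delicate part of the argument.
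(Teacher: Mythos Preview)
Your high-level structure---Bolthausen conditioning, shared Gaussian innovations, and Gaussian concentration---matches the paper's, but the paper organizes the coupling differently in a way that sidesteps the obstacles you flag. In your setup the random Gram matrix $G$ and the isometries defining the innovations are computed along the \emph{$X$-trajectory}, so the coefficient error $\|G-\Sigma\|$ depends on $X$, creating the circularity you then propose to unwind inductively. The paper instead reverses the roles: it builds the \emph{$Y$-process} from $A$ (plus an independent copy $A'$) by Gram--Schmidt orthogonalizing the columns of $F(Y)$, so that the innovations $z_t$ and the relevant Gram matrix $\tfrac{1}{n}F(Y)^\top F(Y)$ are measurable with respect to $Y$ alone. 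The $X$-process is then generated from the same $A$, and a one-shot discrete-time stability bound reduces $\|X-Y\|$ to $\|Y - AF(Y) - G(Y)\|_{2,1}$, which is again a function only of $Y$ and $A$. Since $Y$ is itself a Lipschitz image of an i.i.d.\ Gaussian matrix $Z$ (through the $m_t$ recursion), concentration is applied separately to $\normop{A}$, $\normop{Z}$, and the $Y$-measurable error terms---never to the full coupling map at once.

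This distinction is exactly where your step~3 has a gap. The assertion that $(A,z_1,\dots,z_T)\mapsto \max_t\|x_t-y_t\|$ is globally Lipschitz with a dimension-free constant is false: the map $A\mapsto x_t$ involves iterated products with $A$ and has local Lipschitz constant scaling with powers of $\normop{A}$, and the Cholesky map $G\mapsto(c_{t,s},\beta_t)$ blows up near the PSD boundary. Truncating to the event $\{\normop{A}\le 3,\ \|G-\Sigma\|\le\tfrac12\lambda_{\min}(\Sigma)\}$ does not immediately rescue the argument, because controlling $\|G-\Sigma\|$ (with $G$ computed at $X$) already presupposes control of $X$. The paper's device of anchoring every random error term at $Y$ is precisely what breaks this loop: under the state-evolution matching of Definition~\ref{def:SE} the mean and Gram-matrix mismatches $\psi_1,\psi_2$ vanish identically, and the residual fluctuations of $\tfrac1n\Omega^{-\top}F(Y)^\top F(Y)\Omega^{-1}-\Id$ are controlled by Gaussian concentration for Lipschitz functions of $Z$, yielding the dimension-free $O(\sqrt{r})$ bound in one pass rather than through an inductive argument with compounding losses.
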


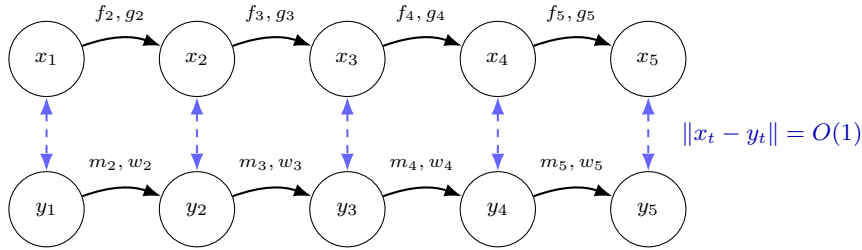
\begin{figure}
\usetikzlibrary{arrows.meta, positioning, calc, backgrounds, fit}
\centering

\begin{tikzpicture}[
    node distance=1.5cm and 2.2cm,
    every node/.style={font=\footnotesize},
    iter/.style={circle, draw, minimum size=1cm},
       coupling/.style={dashed, <->, thick, blue!60, arrows={Latex-Latex}},
    dependency/.style={-{Latex}, thick},
    label/.style={font=\scriptsize}
]

\foreach \i in {1,2,3,4,5}
{
    \node[iter] (x\i) at (\i*2, 2) {$x_{\i}$};
    \node[iter] (y\i) at (\i*2, 0) {$y_{\i}$};
    
    \draw[coupling] (x\i) -- (y\i);
}

\foreach \i/\j in {1/2, 2/3, 3/4, 4/5}
{
    \draw[dependency, bend left=20] (x\i) to (x\j);
}

\foreach \i/\j in {1/2, 2/3, 3/4, 4/5}
{
    \draw[dependency, bend left=20] (y\i) to (y\j);
}

\foreach \i in {2,...,5} {
    \node[label, above left =0.02cm and 0.2cm of x\i] (fx\i) {$f_{\i}, g_{\i}$};
}


\foreach \i in {2,...,5} {
    \node[label, above left =0.03cm and 0.1 cm of y\i] (my\i) {$m_{\i}, w_{\i}$};
}


\node[align=center, font=\scriptsize, blue!70!black] at (11.7, 1.0) {\small $\|x_t - y_t\| = O(1)$ };

\end{tikzpicture}

\caption{
An explicit coupling between the generalized first-order method $x_t$ and a conditionally Gaussian comparison process $y_t$ enables sharp, non-asymptotic analysis of iterative algorithms with Gaussian data matrices. Owing to its simplified structure,  the dynamics of the comparison process be described accurately via the state evolution formalism. The coupling ensures that these dynamics can be faithfully transferred back to the original system, providing dimension-free performance guarantees. }
\end{figure}

Our results advance existing theory by providing a sharp characterization of the dependence on dimension, under extremely general assumptions on the functions.  Under similar structural assumptions, the work in  \cite{berthier:2020,gerbelot:2023,montanari:2024statistically} shows that the difference between the processes is  $o(\sqrt{n})$, which is equivalent to stating that the normalized error $\frac{1}{\sqrt{n}} \norm{ x_t - y_t}$  vanishes asymptotically—but no explicit rate of convergence is given. Meanwhile, nearly all prior work that does provide explicit convergence rates \cite{rush:2018, cademartori:2023, li-fan-wei:2023, li-wei:2023,han:2025_dynamics}  imposes strong structural constraints—most notably, that the functions are row-separable, meaning the $i$-th output of $f_t$ depends only on the $i$-th row of the input. Among other things, these assumptions preclude the analysis of matrices with correlated entries, limiting applicability in more realistic settings.

The dependence on the number of time steps $T$ is analyzed in Section~\ref{sec:upper_bounds}. Under standard regularity conditions, $T$ can grow with the dimension at rate  $T = o( \log n)$. This improves upon the previous best known bound   $o( \log n / \log \log n)$, and is suitable for a broad class of applications. Moreover, as demonstrated in Section~\ref{sec:lower_bound}, this scaling is tight in certain settings,  meaning it cannot be improved without introducing additional assumptions on the problem.

\paragraph{Outline of contributions:} 
\begin{enumerate}

\item \textbf{Constructive Gaussian Coupling:}  We provide a constructive coupling between the two processes.  
The key idea is to  introduce an independent copy of the matrix $A$ that provides a stochastic correction term for \eqref{eq:xt} that exactly matches the target distribution defined by \eqref{eq:yt}.  

\item \textbf{Non-Asymptotic Bounds:}  We provide explicit bounds on the coupling error. 
Our approach applies to  arbitrary choices of parameters $(m, \Sigma)$ and the coupling error is controlled in terms of their mismatch with the system parameters $(f, g)$.  Assuming Lipschitz continuity, we use Gaussian concentration of measure to obtain dimension-free guarantees without requiring separability and independence assumptions. 

\item  \textbf{Tightness via Wasserstein Lower Bounds:}   We establish matching lower bounds on the quadratic Wasserstein distance  (Theorem~\ref{thm:couplingLB}) demonstrating that our upper bounds cannot be improved without stronger assumptions.  
\end{enumerate}

From a pedagogical point of view, a further contribution of our approach is that it provides an entirely self-contained and relatively simple derivation of the AMP framework for Gaussian matrices. In particular, our results can be applied generally for any choice of parameters $(m, \Sigma)$. Optimizing over these parameters then recovers the usual specifications given in the literature.

\paragraph{Notation} For a vector $v \in \bbR^d$ we use $\norm{v}$ to denote the Euclidean norm. 
For a matrix $M \in \bbR^{ m \times n}$ we use $\norm{M}$ to denote the Frobenius norm  and $\normop{M}$ to denote the Euclidean operator norm i.e.,  the spectral norm. For a sequence $(x_t)_{t \in \bbN}$ of elements in space $\cX$  we use $x_{\le t}= \gvec(x_1, \dots, x_t)$ to denote the vector obtained by stacking  to the first $t$ terms.  All functions are assumed to be measurable. A function  $f \colon \cX \to \bbR^n$  with $\cX \subseteq \bbR^n$ is Lipschitz continuous with Lipschitz constant $L \ge 0 $ if  $\norm{f(x) - f(y)} \le L \norm{x- y}$ for all $x, y \in \cX$. The smallest Lipschitz constant is denoted by  $\Lip(f) \coloneqq \inf\{ L \ge 0  \mid \text{ $f$ is $L$-Lipschitz}\}$.

\section{Background}

\subsection{Approximate Message Passing} 

AMP is a powerful framework for designing and analyzing iterative algorithms in high-dimensional inference problems. 
It was first proposed by Donoho, Maleki, and Montanari~\cite{donoho:2009a} in the context of sparse linear regression and its asymptotic behavior was proved rigorously by  Montanari and Bayati~\cite{bayati:2011}. Ensuing work has strengthened and generalized these results by  relaxing the assumptions on the functions \cite{rangan:2011,javanmard:2013a,berthier:2020,gerbelot:2023}, allowing for a broader class of matrices \cite{opper:2016, ma:2017ab, rangan:2019, fan:2022,dudeja:2023,zhong:2021}, and allowing for partial updates~\cite{rossetti:2024_opamp}. For a comprehensive overview, see~\cite{feng:2022}.

While AMP algorithms may vary considerably across applications, many of them can be unified under a common structural framework.  After symmetrization, the basic form of an  AMP algorithm can be modeled as a special case of the system \eqref{eq:xt} given by
\begin{align}
x_t = A  f_t(x_{<t}  )   - \sum_{s < t} b_{st}   f_s(x_{<s}) 
\label{eq:amp} 
\end{align}
where  $(b_{st} )_{1 \le s < t}$ are scalar debiasing coefficients designed to decorrelate the iterates. The term  $ \sum_{s < t} b_{st}   f_s$ is referred to as the Onsager correction term \cite{donoho:2009a}.

\paragraph{Debiasing Coefficients and State Evolution.} State evolution defines a Gaussian  process whose mean and covariance are updated recursively in terms of the functions used in the AMP system.  While the original formulation of state evolution was tied to row-separable functions \cite{bayati:2011, javanmard:2013a}, this was extended to non-separable pseudo-Lipschitz functions by Berthier et al.~\cite{berthier:2020}.  The state evolution in \cite{berthier:2020} defines a zero-mean Gaussian process $(w_t)_{t \in \bbN}$ with covariance of the form $\cov(w_s, w_t) = \Sigma_{st} \Id_n$ for scalars $(\Sigma_{st})_{s,t \in \bbN}$. The  covariance is initialized with $\Sigma_1 = \frac{1}{n} \|f_1\|^2$, where  $f_1$ is a constant function that defined the initialization of the algorithm. The  covariance parameters  for time $t$ are then defined recursively in terms of $\Sigma_{<t} = (\Sigma_{rs})_{1 \le r,s < t}$ according to 
\begin{align}
\Sigma_{st} & = \frac{1}{n} \ex{ \langle f_s (w_{< s} ), f_t(w_{<t}) \rangle},
\end{align}
 where the expectation is with respect to $w_{<t} \sim \normal( 0, \Sigma_{<t} \otimes \Id_n)$ 
 
 The link between the AMP recursion in  \eqref{eq:amp} and the state evolution process hinges critically on the precise specification of the debiasing coefficients. In practice, the debiasing terms are carefully designed functions of the previous states.  For the purposes of theoretical analysis, it is convenient to consider deterministic coefficients that are defined in terms of the evolution process. Assuming  the $f_t$ are weakly differentiable, these coefficients are given by \cite{berthier:2020}
 \begin{align}
b_{st} & =  \frac{1}{n} \ex*{ \gtr\left( \frac{ \dd f_t}{ \dd w_s} \right)}, \label{eq:bst_amp} 
\end{align}
where  $\frac{ \dd f_t}{ \dd w_s}$ denotes the $n \times n$ Jacobian matrix of partial derivatives of $f_t$ evaluated at $y_{<t}$.  
  
\paragraph{Impact of AMP.}
The AMP framework has been applied successfully for high-dimensional inference problems, including regression~ \cite{donoho:2009a,bayati:2011,rangan:2011,bayati:2012,javanmard:2013a,rangan:2019,tan:2024pooled}, coding\cite{rush:2017aa},  matrix factorization\cite{fletcher:2018,parker:2014b, deshpande:2014, lesieur:2017, Montanari:2021a,behne:2022,barbier:2023,rossetti:2023_amp_mtp,rossetti:2024isit,rossetti:2024_opamp}, and sampling \cite{el-alaoui:2022,montanari:2024posterior,huang:2024sampling,cui:2024_sampling}. 
The framework also provides asymptotically exact formulas for performance matrices, such as mean-squared error, that can be  compared against information-theoretic limits~\cite{donoho:2013,deshpande:2017,reeves:2019c,barbier:2019,reeves:2020} to identify regimes where iterative   methods are provably optimal.

\subsection{Generalized First-Order Methods}

Generalized first-order methods  \cite{celentano:2020estimation}  extend the classical AMP framework by allowing more general update rules of the form in \eqref{eq:xt}.  This formulation includes algorithms such as gradient descent, momentum methods, and memory-augmented AMP variants as special cases. Recent work \cite{celentano:2021highdimensional,gerbelot:2024rigorous,montanari:2024statistically,fan:2025dynamical} has used this generalization to provide rigorous formulations of dynamical mean-field theory. 

Formally, a generalized first-order method of the form given in  \eqref{eq:xt} and the original AMP formulation in \eqref{eq:amp} are functionally equivalent in the sense that one can be transformed to the other via a change of variables~\cite{montanari:2024statistically}. However, this mapping is implicit and it can be difficult to see how the corresponding transformation carries over to the state evolution analysis. 

\paragraph{State Evolution.} In the context of generalized first-order methods described by  \eqref{eq:xt},  the objective of state evolution is to establish an explicit mapping from the  function sequences $(f, g)$ to the parameters $(m, \Sigma)$ that characterize the comparison process in \eqref{eq:yt}. This mapping was described in \cite{celentano:2020estimation} for specific applications, and more recently in \cite{han:2025_dynamics} for row-separable functions. However, to our knowledge, there are no general results that apply to non-separable functions without relying on implicit changes of variables that reduce the analysis to the AMP setting.

In the definition below, we propose an explicit recursive construction that maps $(f, g)$ directly to the parameters $(m , \Sigma)$.  This construction involves scalar coefficients  $(b_{st})$, which are closely related to the debiasing coefficients appearing in the AMP framework.

\begin{definition}[State Evolution]  \label{def:SE} 
The mean and covariance  are initialized with $m_1 = g_1$ and $\Sigma_1 = \frac{1}{n} \norm{f_1}^2$. Then, assuming that $f_t$ and $g_t$ are square integrable functions of $y_{<t}$, the parameters for time $t$ are defined recursively according to  
\begin{subequations} 
\begin{align}
m_t(\cdot ) & = g_t(\cdot ) + \sum_{s < t}  b_{st}  \, f_s(\cdot )\\
b_{st} & = \frac{1}{n} \sum_{r \le s}\big [\Sigma^+_{\le t}\big]_{rs}
\, \ex{ \langle y_r - m_r(y_{<r}) ,  f_t(y_{<t})  \rangle }, \label{eq:bts}\\
\Sigma_{st} & = \frac{1}{n} \ex{ \langle f_s (y_{< s} ), f_t(y_{<t}) \rangle} , 
\end{align}
\end{subequations}
where  $\Sigma_{\le t}\coloneqq (\Sigma_{rs})_{1 \le r,s \le t}$ is the leading principal submatrix of $\Sigma$  and $(\cdot)^+$ denotes the  Moore-Penrose pseudoinverse of a matrix. 
\end{definition} 

In contrast to the  standard characterizations of state evolution, this formulation requires only the existence of finite second moments at each iteration and the functions are not required to be differentiable. Assuming differentiability,  Stein's Lemma (a.k.a.\ Gaussian integration by parts) can be applied to provide an alternative expression for the coefficients in \eqref{eq:bts} that generalizes gradient-based characterization for AMP given in \cite{berthier:2020}.

\begin{lemma}\label{lem:SE}
Suppose that the function sequences $(f_t)$ and $(g_t)$ are weakly differentiable and their derivatives are square integrable functions of $y_{\le t}$. Then,  \eqref{eq:bts} can be expressed as
\begin{align}
b_{st} & = \frac{1}{n} \ex*{ \gtr\left( \frac{ \dd f_t}{ \dd y_s} \right) }  +  \frac{1}{n} \ex*{ \gtr\left( \frac{ \dd f_t}{ \dd y_{\le t}}  \left( \Id_{nt} -  \frac{ \dd m_{\le t}}{ \dd y_{\le t}}  \right)^{-1}  \frac{ \dd m_{\le t}}{ \dd y_s}  \right) },  \label{eq:bts_stein}
\end{align}
where $\frac{\dd f_t}{ \dd y_{\le t}} \in \bbR^{n \times nt}$ and  $\frac{\dd  m_{\le t}}{ \dd y_{\le t}} \in \bbR^{nt \times nt}$ denote the Jacobian matrices of $f_t$ and $m_{\le t}$ respectively. 
\end{lemma}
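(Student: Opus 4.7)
The plan is to derive \eqref{eq:bts_stein} from \eqref{eq:bts} via Gaussian integration by parts combined with the chain rule. The starting observation is that under the construction of Definition~\ref{def:SE}, one has $y_r - m_r(y_{<r}) = w_r$, so each expectation in \eqref{eq:bts} takes the form $\ex{\langle w_r, f_t(y_{<t})\rangle}$, where $w_{\le t}$ is a centered Gaussian vector with covariance $\Sigma_{\le t} \otimes \Id_n$. Since the recursion $y_s = m_s(y_{<s}) + w_s$ is block lower triangular, the map $w_{<t} \mapsto y_{<t}$ is well-defined and (weakly) differentiable, so $f_t(y_{<t})$ may be treated as a Sobolev function on a single Gaussian space.

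Multivariate Stein's identity then gives
\[
\ex{\langle w_r, f_t(y_{<t})\rangle} = \sum_{q \le t} \Sigma_{rq}\, \ex*{ \gtr\left( \frac{\dd f_t}{\dd w_q} \right) },
\]
with the $q=t$ contribution vanishing because $f_t$ does not depend on $w_t$. Implicit differentiation of the identity $w_{\le t} = y_{\le t} - m_{\le t}(y_{\le t})$ yields $\dd y_{\le t}/\dd w_{\le t} = (\Id_{nt} - M)^{-1}$, where $M := \dd m_{\le t}/\dd y_{\le t}$ is strictly block lower triangular, and hence
\[
\frac{\dd f_t}{\dd w_{\le t}} = \frac{\dd f_t}{\dd y_{\le t}} \, (\Id_{nt} - M)^{-1}.
\]
Substituting into \eqref{eq:bts} and invoking the identity $\sum_{r} [\Sigma^+_{\le t}]_{rs}\, \Sigma_{rq} = \delta_{sq}$ (where the sum over $r$ ranges over the full index set of $w_{\le t}$, and symmetry of $\Sigma_{\le t}$ and its pseudoinverse is used) collapses the double sum to the $s$-th block column of $(\Id_{nt}-M)^{-1}$. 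Finally, the algebraic decomposition
\[
(\Id_{nt} - M)^{-1} = \Id_{nt} + (\Id_{nt} - M)^{-1} M
\]
separates this block column into a ``diagonal'' contribution that produces $\dd f_t/\dd y_s$ and a ``feedback'' contribution that produces $\frac{\dd f_t}{\dd y_{\le t}} (\Id_{nt}-M)^{-1} \frac{\dd m_{\le t}}{\dd y_s}$, recovering \eqref{eq:bts_stein} after dividing by $n$.

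The principal technical obstacle is justifying Stein's identity under only weak differentiability and square integrability of $f_t$ and $m_t$; the cleanest route is to mollify by convolution with smooth Gaussian densities, apply the classical identity to the regularized process, and pass to the limit using $L^2$ continuity of both sides. A secondary subtlety is the treatment of the Moore--Penrose pseudoinverse when $\Sigma_{\le t}$ is singular: one restricts to the support of $w_{\le t}$ and verifies that $\Sigma^+_{\le t}\,\Sigma_{\le t}$ acts as the identity on the relevant subspace, which suffices since both sides of \eqref{eq:bts_stein} depend only on the law of the Gaussian process. The remaining block-matrix bookkeeping (keeping track of which blocks of $M$ and $\dd f_t/\dd y_{\le t}$ vanish due to the causal structure) is routine.
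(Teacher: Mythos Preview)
Your proposal is correct and follows essentially the same route as the paper's own proof: rewrite $b_{st}$ in terms of expectations $\ex{\langle w_r, f_t\rangle}$, apply Stein's lemma to obtain $b_{st} = \tfrac{1}{n}\,\ex{\gtr(\dd f_t/\dd w_s)}$ after the $\Sigma^+\Sigma$ cancellation, and then convert to $y$-derivatives via implicit differentiation of $y = m + w$ together with the algebraic identity $(\Id - M)^{-1} = \Id + (\Id - M)^{-1}M$. The paper's version is terser and works directly with $\Sigma^{-1}$, omitting the mollification and pseudoinverse discussion that you add.
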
 

\begin{remark} 
The fact that each $m_s$ depends only on inputs before time $s$ implies that $\frac{ \dd m_s}{ \dd y_r} = 0_{n \times n} $ for all $r \ge s$, and thus  the Jacobian matrix of $m_{\le t}$ is a strictly lower triangular block matrix.  This ensures that the right-hand side of \eqref{eq:bts_stein}  is well-defined as a function of the distribution of $y_{<t}$ and the coefficients $(b_{rt})_{1 \le r < s}$. 
\end{remark}

\begin{remark} 
The AMP framework can be recovered from this general construction by specifying each  $g_t$ as a function of $(f_s)_{s < t}$ such that $m_t \equiv 0$. In such cases, we see that \eqref{eq:bts_stein} reduces to the specification of the debiasing coefficients in \eqref{eq:bst_amp}.
\end{remark}

\subsection{Connection with Related Work} 

There has been growing interest in non-asymptotic guarantees for AMP-type methods, particularly those that aim to characterize finite-sample behavior. Notable results include exponential convergence rates for empirical measures\cite{rush:2018, cademartori:2023}, explicit distributional decompositions \cite{lu:2021dice, li-fan-wei:2023, li-wei:2023}, refined analysis  in the context of the Sherrington-Kirkpatrick  model~\cite{celentano:2023,celentano:2023a}, and recent progress on entrywise guarantees for generalized first-order methods~\cite{han:2025_dynamics}.  However, most of these results rely on structural assumptions such as separable nonlinearities, which are more restrictive than those used in the asymptotic AMP theory \cite{berthier:2020}.

In contrast, this paper develops a direct coupling framework that compares the iterates of a generalized first-order method with a conditionally Gaussian process via a one-shot argument, rather than relying on the inductive techniques common in prior AMP analyses. This coupling yields tight, non-asymptotic bounds on the deviation between the two processes over a fixed number of iterations, under general full-memory and non-separable settings.

Some specific connections are as follows:  

\begin{itemize} 
\item Rush and Venkataramanan~\cite{rush:2018}  study AMP in the regression setting under separable nonlinearities, showing exponential convergence of the empirical measure to a limiting process defined via state evolution. 
 Their analysis allows the number of time steps to scale sub-logarithmically with the dimension according to $T  = o( \frac{  \log n}{ \log \log n}$). 

\item Berthier et al.~\cite{berthier:2020} and Gerbelot and Berthier \cite{gerbelot:2023} focus on the AMP recursion in \eqref{eq:amp} with non-separable functions and  establish convergence with respect to sequences of pseudo-Lipschitz test functions. As shown in \cite[Appendix~C]{rossetti:2024_opamp}, their results imply and are implied by the existence of a sequence (in $n$)  of couplings under which $n^{-1/2}\norm{x_{\le t} - y_{\le t}} \to 0$ in probability as $n \to \infty$. Under the same regularity conditions, the results in this paper show that the typical deviations of $\norm{x_{\le t} - y_{\le t}} $ can be bounded independently of the problem dimension. 

\item  Lu~\cite{lu:2021dice} provides a distributional decomposition of the AMP algorithm for the purposes of enabling rapid simulation of AMP sample paths without generating a full $n \times n$ matrix.  While our coupling does not follow this route, such decompositions may offer alternative paths to sharp non-asymptotic analysis.

\item Aloui et al.~\cite{el-alaoui:2022} and Celentano et al.~\cite{celentano:2023} focus on a symmetric rank-one matrix model (i.e., the Sherrington-Kirkpatrick model)  and show that AMP can be viewed as an optimization technique to find the critical points of the TAP free energy. By leveraging geometric properties of the local landscape of the objective function near critical points, they provide  precise theoretical guarantees in regimes where the number of iterations scales polynomially with the dimension.  Our theory differs in scope in the sense that it is designed to be applied under the general setting of the asymptotic theory \cite{berthier:2020}. 

\item A different approach to the non-asymptotic analysis is proposed in the recent work of  Li~et~al. \cite{li-fan-wei:2023} and Li and Wei~\cite{li-wei:2023}, who consider a decomposition of the AMP process into an orthogonal basis defined by the sample path. This high level idea of this decomposition is to distinguish between dominant terms, which are approximately Gaussian,  and certain residual terms, which are presumed to be negligible. Focusing on special cases of spiked matrix model with separable functions (similar to \cite{celentano:2023}), a detailed analysis of the residual terms yields rigorous guarantees for AMP algorithms where the number of iterations scales polynomially with the dimension.

\item Finally, the recent work of Han~\cite{han:2025_dynamics} focuses directly on generalized first-order methods of the form in \eqref{eq:xt} and establishes non-asymptotic entrywise guarantees under the assumptions of row-separable nonlinearities with bounded first, second, and third derivatives. In particular,  \cite[Theorem~2.4]{han:2025_dynamics} provides an upper bound on the distributional distance between the $i$-th row of $x_{\le t}$ and the corresponding row of the comparison process  $y_{\le t}$ for each $i \in [n]$. For a suitably smooth class of test functions, the rate of convergence is  $ O( (\log n)^{c_1 T^5} n^{-1/c_0^T} )$ for positive constants  $c_0, c_1$. In addition, Han proves a universality result showing that the same results hold when the matrix $A$ is drawn from a symmetric ensemble with independent entries and matched first and second moments. 
Compared to our results, which apply to general nonlinearities under Gaussian assumptions, \cite{han:2025_dynamics} trades off generality for universality, addressing a broader class of matrix ensembles at the cost of stronger smoothness assumptions.
\end{itemize}

\section{Main Results} 

We establish theoretical guarantees on the closeness between the distributions of the generalized first-order method in  \eqref{eq:xt} and the comparison process in  \eqref{eq:yt} by showing that there is coupling, i.e., a joint distribution on the two processes,  under which their difference is bounded independently of the dimension with high probability.  The significance of these results is that the behavior of generalized first-order methods can be analyzed precisely by studying the much simpler comparison process.

For given number of  time steps  $T \in [n]$, we represent the sequences  generated by \eqref{eq:xt} and \eqref{eq:yt}  in terms of the $n \times T$  matrices 
\begin{align*}
X = [x_1, \dots, x_T] , \qquad Y = [ y_1, \dots, y_T] .
\end{align*}
A coupling of these processes is any specification of a joint distribution on the pair  $(X, Y)$ whose marginal distributions satisfy \eqref{eq:xt} and \eqref{eq:yt}. 

We also define the matrix-valued functions: 
\begin{align*}
F = [f_1, \dots, f_T] , \qquad G  = [g_1, \dots, g_T] ,  \qquad M = [m_1 , \dots, m_T].
\end{align*}
We use the notation $F(X)$  to express the evaluation of function $F$ on the matrix $X$.

\subsection{Coupling Construction} \label{sec:coupling}

The following construction defines a joint distribution on the matrix $A$ and the process  $(y_t)_{t \in [T]}$. It applies generally to  any collection of parameters $(f, g, m, \Sigma)$.

\begin{definition}[Coupling Construction]\label{def:coupling}   Let $A, A' \overset{\mathrm{iid}}{ \sim }\mathsf{GOE}(n)$, and let $\Omega$ be a $T \times T$ upper triangular matrix with nonnegative diagonal entries such that  $\Sigma = \Omega^\top  \Omega$ is the Cholesky decomposition of $\Sigma$. 
Let $(v_1, \dots, v_n)$ be a fixed ordered basis for $\bbR^n$.   For each $t = 1, \dots, T$, perform the following steps:
\begin{enumerate}
\item If $f_t(y_{<t})$ does not lie in the span of $\{q_1, \dots, q_{t-1}\}$ then set $q_t \in\bbR^n$ to be the vector of length $\sqrt{n}$ such that 
\begin{align}
q_t \propto  f_t(y_{<t}) - \sum_{s < t} \tfrac{1}{n}  \langle q_s , f_t(y_{<t})  \rangle q_s \label{eq:coupling_qt}
\end{align}
Otherwise, substitute $f_t(y_{<t})$ with the first basis vector in $(v_1, \dots, v_n)$ that is not in the span of $\{q_1, \dots, q_n\}$. 

\item Update the state variables $(z_t, w_t, y_t)$  according to 
\begin{subequations}
\label{eq:coupling_update}
\begin{align}
z_t & = A q_t  + \frac{1}{2} ( A'_{tt} -  \tfrac{1}{n} \langle q_t, A q_t \rangle  ) \,  q_t +   \sum_{s  < t} ( A'_{st} -  \tfrac{1}{n} \langle q_s, A q_t \rangle  )  \, q_s 
\label{eq:coupling_zt}\\
w_t & = \sum_{s  \le  t} \Omega_{st} z_{s} \label{eq:coupling_wt} \\
y_t  &= m_t(y_{<t})  +w_t \label{eq:coupling_yt}
\end{align}
\end{subequations}
\end{enumerate}
\end{definition}

\begin{remark}
The ordered basis $(v_1, \dots, v_n)$  ensures uniqueness of the construction when $f_t$ depends linearly on previous evaluations. In principle, any basis suffices  and one may use the standard basis or a randomly generated basis. 
\end{remark}

The following result shows that the construction in Definition~\ref{def:coupling} produces a sequence $(y_t)_{t \in [T]}$ with the desired marginal distribution specified by \eqref{eq:yt}. If  $(x_t)_{t \in[T]}$ is generated from \eqref{eq:xt} using the same matrix $A$, then this defines a coupling between the two processes.
  
\begin{theorem}\label{thm:coupling} 
The sequence $(z_t)_{t \in [T]}$ constructed in  Definition~\ref{def:coupling} has independent standard Gaussian elements $z_t \sim \normal(0, \Id_n)$. Consequently,  $(w_t)_{t \in [T]}$ is a zero-mean Gaussian process with covariance $\cov(w_s, w_t) = \Sigma_{st} \Id_n$, and   $(y_t)_{t \in [T]}$ follows the distribution in \eqref{eq:yt}. Moreover,  the following identity holds for all  $1 \le t \le r \le T$:
\begin{align}
z_t = A q_t  -  \frac{1}{n}   \sum_{s  \le r } \langle z_s, q_t \rangle \,    q_s  +   \sum_{s \le r }  A'_{st} q_s.
\label{eq:ztqt_coupling}
\end{align}. 
\end{theorem}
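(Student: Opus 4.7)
My plan splits the proof into an algebraic identity check followed by a probabilistic induction, with the statements on $(w_t)$ and $(y_t)$ falling out at the end from a one-line covariance calculation.

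First I would establish the identity \eqref{eq:ztqt_coupling}. The trick is to extract $\langle z_s,q_t\rangle$ directly from \eqref{eq:coupling_zt} for each $s$. Using $\langle q_r,q_{r'}\rangle = n\,\delta_{rr'}$, almost every term drops out: for $s<t$ the orthogonality kills the whole correction and leaves $\langle z_s,q_t\rangle = \langle q_s,Aq_t\rangle$; at $s=t$ only the diagonal correction survives, giving $\langle z_t,q_t\rangle = \tfrac{1}{2}\langle q_t,Aq_t\rangle+\tfrac{n}{2}A'_{tt}$; for $s>t$ the single surviving term combined with $A^\top=A$ yields $\langle z_s,q_t\rangle = n\,A'_{ts} = n\,A'_{st}$. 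Plugging these inner products into the right-hand side of \eqref{eq:ztqt_coupling} for any $r\ge t$ reproduces the defining formula \eqref{eq:coupling_zt}, which establishes the identity.

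Next I would induct on $t$ to show $z_1,\dots,z_t$ are i.i.d.\ $\normal(0,\Id_n)$. Since $q_1,\dots,q_t$ are measurable with respect to $\cF_{t-1}=\sigma(z_{<t})$, I work with the enlarged sigma-algebra $\cG_{t-1}=\cF_{t-1}\vee\sigma(A'_{rs}:r\le s\le t-1)$; by the already-proved identity one can solve \eqref{eq:coupling_zt} for $Aq_s$ at each $s\le t-1$, so $\cG_{t-1}$ reveals exactly $Aq_1,\dots,Aq_{t-1}$. Using rotational invariance of the GOE, I expand $A$ in the orthonormal basis $(e_s)_{s\in[n]}$ with $e_s=q_s/\sqrt n$ for $s\le t$; the matrix $\tilde A$ with entries $\tilde A_{ij}=\langle e_i,Ae_j\rangle$ is again GOE, and $\cG_{t-1}$ pins down precisely those entries with $\min(i,j)\le t-1$. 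Consequently $\tilde A_{tt}\sim\normal(0,2/n)$, $\{\tilde A_{st}\}_{s>t}\sim\normal(0,1/n)$, and the still-unused entries $\{A'_{st}\}_{s\le t}$ are fresh and mutually independent of $\cG_{t-1}$.

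Writing $Aq_t=\sum_{s<t}\tilde A_{st}q_s+\tilde A_{tt}q_t+(Aq_t)^\perp$ with $(Aq_t)^\perp\in\mathrm{span}\{e_s:s>t\}$ and using $\tfrac{1}{n}\langle q_s,Aq_t\rangle=\tilde A_{st}$ for $s\le t$, the subtractive corrections in \eqref{eq:coupling_zt} cancel the $\cG_{t-1}$-measurable components, leaving
\begin{align*}
z_t \;=\; \tfrac{1}{2}(\tilde A_{tt}+A'_{tt})\,q_t \;+\; \sum_{s<t} A'_{st}\,q_s \;+\; (Aq_t)^\perp.
\end{align*}
The three groups of summands are mutually independent centered Gaussians; in the orthonormal basis $(e_s)_{s\in[n]}$ the coefficients of $z_t$ are i.i.d.\ $\normal(0,1)$ (the variance on the $e_t$ direction being $n\cdot\tfrac14(2/n+2/n)=1$), so $z_t\mid\cG_{t-1}\sim\normal(0,\Id_n)$. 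Since $\cF_{t-1}\subseteq\cG_{t-1}$ this closes the induction, and the statements on $(w_t),(y_t)$ follow from $\cov(w_s,w_t)=[\Omega^\top\Omega]_{st}\Id_n=\Sigma_{st}\Id_n$. The most delicate point is the inductive bookkeeping: $q_t$ and the linear-dependence fallback to a fixed basis vector $v_j$ depend adaptively on the whole history through the non-separable $f_t$, so the identity \eqref{eq:ztqt_coupling}---proved first---is what makes the argument transparent, by exhibiting the observed data as an invertible linear function of $\{Aq_s\}_{s<t}$ together with the used $A'$ entries.
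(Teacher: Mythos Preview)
Your proposal is correct and takes a genuinely different route from the paper. You verify the identity \eqref{eq:ztqt_coupling} first by computing $\langle z_s, q_t\rangle$ case-by-case, then run a probabilistic induction that explicitly decomposes $z_t$ into fresh Gaussian components along the adaptive basis $(e_s)$. The paper instead proceeds algebraically: it writes $z_t = F_t\,\theta$ with $\theta$ the vectorization of $(A,A')$ and $F_t$ built from Kronecker products, verifies $F_s F_t^\top = \mathbf{1}_{s=t}\,\Id_n$ by direct computation, and applies the general Gaussian conditioning Lemma~\ref{lem:gaussian_cond}; only afterward does it derive \eqref{eq:ztqt_coupling} via the same inner-product calculation you do up front. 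Your approach is closer to Bolthausen's original conditioning heuristic and is arguably more transparent geometrically; the paper's approach trades that intuition for a one-shot orthogonality computation that avoids the inductive bookkeeping you flag as delicate.

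One caveat: your phrase ``using rotational invariance of the GOE \dots\ $\tilde A$ is again GOE'' is not rigorous as stated. The basis $(e_s)$ depends on $A$ through $z_{<t}$, so the unconditional law of $\tilde A$ is \emph{not} GOE. What you actually need is the conditional statement: given $\cG_{t-1}$, the entries $\tilde A_{ij}$ with $\min(i,j)\ge t$ are fresh and independent of the past. This is true, but it is a Gaussian-conditioning fact (a jointly Gaussian vector conditioned on adaptive linear measurements remains Gaussian with the usual conditional mean and covariance), not a consequence of rotational invariance under deterministic rotations. The paper's Lemma~\ref{lem:gaussian_cond} is precisely this tool; invoking it (or stating the Bolthausen conditioning lemma explicitly) would close the gap in your induction.
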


\subsection{Non-Asymptotic Upper Bounds}\label{sec:upper_bounds}

We now use the coupling in Definition~\ref{def:coupling} to provide an explicit bound on the coupling error. We will  assume throughout this section that $\Sigma$ is positive definite with (necessarily unique) Cholesky decomposition $\Sigma = \Omega^\top \Omega$. For the next result, we  define  the $T \times T$ random matrices 
\begin{align*} 
 B_n  \coloneqq \tfrac{1}{n} \Sigma^{-1}  (Y - M(Y))^\top F(Y), \qquad  S_n  & \coloneqq  \tfrac{1}{n}  \Omega^{-\top} F(Y)^\top F(Y)  \Omega^{-1} ,
\end{align*}
which can be viewed as the sample covariance matrices defined by  the  rows of  $F(Y) \Omega^{-1}$ and $(Y- M(Y)) \Sigma^{-1}$. 

Additionally,  we use $\norm{\, \cdot\, }_{2,1}$ to denote the $\ell_2 \to \ell_1$ operator norm, i.e., the sum of the column norms. For an $n \times k$ matrix $V = [v_1, \dots, v_k]$, this norm satisfies $\norm{V}_{2,1} \coloneqq \sum_{j=1}^k \norm{v_j}  \le \sqrt{k} \,  \norm{V} $.  

\begin{theorem}\label{thm:Delta12} 
Assume that $\Sigma$ is positive definite with Cholesky decomposition $\Sigma = \Omega^\top \Omega$ and  the functions $f_t$ and $g_t$ are Lipschitz with constants $L_f$ and  $L_g$ for all $t \in [T]$.  Then, under the coupling $(X, Y)$ from Definition~\ref{def:coupling}, the event
\begin{align*}
\norm{X- Y} \le    \big (1 + 4 L_f + L_g\big)^{T-1} \,  \big(  \Delta_1 + \Delta_2  +   2 \sqrt{T} +  \sqrt{2r}    \big)  \cdot  \norm{\Omega} _{2,1} 
\end{align*}
holds with probability at least $1 - 3 e^{-r}$ for all $0 \le r \le n$ where
\begin{subequations}
\label{eq:Deltas}
\begin{align}
\Delta_1 & \coloneqq     \norm{M(Y) - G(Y)-   F(Y)   B_n }_{2,1}  \norm{ \Omega}_{2,1}^{-1} \label{eq:Delta1} \\
\Delta_2 & \coloneqq  52 \log_2(4T)    \sqrt{n}      \normop{S_n  - \Id } . \label{eq:Delta2}
\end{align}
\end{subequations} 
\end{theorem}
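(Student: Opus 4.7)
The plan is to combine a discrete Gronwall-type recursion with an exact algebraic decomposition of the residual matrix, then concentrate its pieces using the coupling identity from Theorem~\ref{thm:coupling}. Define
\[
R_t \coloneqq A f_t(y_{<t}) + g_t(y_{<t}) - m_t(y_{<t}) - w_t,
\]
so that $x_t - y_t = A[f_t(x_{<t}) - f_t(y_{<t})] + [g_t(x_{<t}) - g_t(y_{<t})] + R_t$. Lipschitz continuity gives $\|x_t - y_t\| \le (L_f \normop{A} + L_g)\|x_{<t} - y_{<t}\| + \|R_t\|$. On the event $\{\normop{A} \le 4\}$, which has probability at least $1 - e^{-r}$ for all $r \le n$ by standard GOE spectral norm concentration, a discrete Gronwall iteration (using $\sqrt{a^2 + b^2} \le a + b$) yields
\[
\|X - Y\| \le (1 + 4L_f + L_g)^{T-1} \|R\|_{2,1}.
\]
It thus suffices to show that $\|R\|_{2,1} \le (\Delta_1 + \Delta_2 + 2\sqrt{T} + \sqrt{2r})\|\Omega\|_{2,1}$ with probability at least $1 - 2e^{-r}$.

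For the decomposition, let $Q = [q_1,\dots,q_T]$, write $F(Y) = Q C$ with $C$ upper triangular, and observe $W = Z \Omega$ where $Z = [z_1,\dots,z_T]$. The identity $W^\top F(Y) = n\Omega^\top P C$ with $P \coloneqq \frac{1}{n} Z^\top Q$ together with $B_n = \frac{1}{n}\Sigma^{-1} W^\top F(Y)$ and $\Sigma = \Omega^\top \Omega$ yields the key algebraic relation $\Omega B_n = P C$. Substituting Theorem~\ref{thm:coupling}'s expression $A Q = Z + Q P - Q A'$ into $R = A F(Y) + G(Y) - M(Y) - W$, using $M(Y) - G(Y) = F(Y) B$ from Definition~\ref{def:SE}, and adding and subtracting $F(Y) B_n$, produces the clean decomposition
\[
R = Z(C - \Omega) \,-\, Q(C - \Omega) B_n \,-\, [M(Y) - G(Y) - F(Y) B_n] \,-\, Q A' C,
\]
in which the third term contributes exactly $\Delta_1 \|\Omega\|_{2,1}$ to $\|R\|_{2,1}$ by definition.

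For the first two terms I would write $C \Omega^{-1} = I + U$ with $U$ upper triangular satisfying $U + U^\top + U^\top U = S_n - I$. A Cholesky perturbation bound, obtained by solving this relation one diagonal at a time, yields $\|U\|_{\mathrm{op}} \lesssim \log_2(4T) \normop{S_n - I}$. Combined with the standard Gaussian matrix bound $\normop{Z} = O(\sqrt{n})$, the identity $\normop{Q} = \sqrt{n}$, an \textit{a priori} operator-norm bound on $B_n$, and the submultiplicative inequality $\|NM\|_{2,1} \le \normop{N} \|M\|_{2,1}$, the first two terms contribute at most $\Delta_2 \|\Omega\|_{2,1}$. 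For the remaining term, $Q/\sqrt{n}$ has orthonormal columns, so $Q A'/\sqrt{n}$ is conditionally a Gaussian matrix living in a fixed $T$-dimensional subspace; Gaussian concentration of its $\ell_{2,1}$-action on the columns of $C$ then yields $\|Q A' C\|_{2,1} \le (2\sqrt{T} + \sqrt{2r})\|\Omega\|_{2,1}$ with probability at least $1 - e^{-r}$. A union bound over the three high-probability events delivers the claimed failure probability $3 e^{-r}$. The main obstacle, I expect, is establishing the Cholesky perturbation bound with the correct $\log_2(4T)$ prefactor in a matrix norm compatible with the downstream $\ell_{2,1}$ inequalities; a secondary technical point is that the term $Q(C - \Omega) B_n$ must keep $B_n$ on the right so that $\normop{B_n}$ can be absorbed cleanly into the scalar factor, separating it from the Gaussian fluctuations of $Z$.
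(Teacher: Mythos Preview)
Your outline matches the paper's proof almost exactly: the same Gronwall reduction (Lemma~\ref{lem:stability2}) to $\|Y-AF(Y)-G(Y)\|_{2,1}$, the same four-term decomposition of the residual after adding and subtracting $F(Y)B_n$ (the paper writes your $-Q(C-\Omega)B_n$ as $\tfrac{1}{n}Q(C\Omega^{-1}-\Id)Z^\top QC$, which is algebraically identical via $\Omega B_n=PC$), the same Cholesky perturbation bound (Lemma~\ref{lem:chol_pert}) for $\normop{C\Omega^{-1}-\Id}$, and separate concentration of $\normop{A}$, $\normop{Z}$, and $\normop{A''}$.

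Two points to correct. First, drop the appeal to Definition~\ref{def:SE}: Theorem~\ref{thm:Delta12} does not assume that $(m,\Sigma)$ are matched via state evolution, and indeed you never use $M(Y)-G(Y)=F(Y)B$; the decomposition already follows from adding and subtracting $F(Y)B_n$. Second, your assertion that $QA''/\sqrt{n}$ is conditionally Gaussian given $Q$ is incorrect: in the coupling, $q_t$ is a function of $y_{<t}$, hence of $z_{<t}$, hence of the entries $A'_{rs}$ with $r\le s<t$, which lie inside $A''$. The paper sidesteps this by using the deterministic identity $\normop{Q}=\sqrt{n}$ to write $\|QA''C\|_{2,1}\le\sqrt{n}\,\normop{A''}\,\|C\|_{2,1}$ and then concentrating $\sqrt{n}\,\normop{A''}\le 2\sqrt{T}+\sqrt{2r}$ unconditionally; the extra factor $\|C\|_{2,1}\le(1+\normop{C\Omega^{-1}-\Id})\|\Omega\|_{2,1}$ is absorbed into the $\Delta_2$ budget, which together with the analogous $(1+\widetilde\Delta_2)$ factors on the other terms is where the constant $52\log_2(4T)$ originates.
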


Theorem~\ref{thm:Delta12}  shows that coupling error is controlled  by the random variables $(\Delta_1, \Delta_2)$.  The term $\Delta_1$ depends on the mismatch between the parameters $(f, g)$ and $(m, \Sigma)$, while  $\Delta_2$ quantifies the deviation between the empirical second moment matrix $\frac{1}{n} F(Y)^\top F(Y)$ and the target covariance $\Sigma$. For the purposes of analysis, an important property of this bound is that both variables are measurable with respect to the comparison process  $Y$. Thus, we can exploit structural properties of $Y$ to bound their distributions.

\begin{remark}
In principle, one could attempt to optimize the choice of $(m, \Sigma)$ as a function of $(f, g)$  to minimize the associated error terms. We note that the term $\Delta_1$ cannot be completely eliminated  because the entries $[B_{n}]_{st}$  for $s \ge t$ depend on the current and future states, and therefore cannot be used in the definition $m_t$, which must depend solely on past information.  
\end{remark}

\begin{remark}
The term $ (1 + 4 L_f + L_g)^{T-1}$ arises from an upper bound on the stability of system \eqref{eq:xt} given in Lemma~\ref{lem:stability2}. This part of the analysis can be improved at the cost of some additional complexity in the theorem statement and the assumptions. For example, the bound in  Lemma~\ref{lem:stability2} holds under the weaker assumption that there is an affine upper bound on the modulus of continuity of each $f_t$ and $g_t$ at the point $y_{\le t}$. 
\end{remark}

To provide explicit tail bounds on $(\Delta_1, \Delta_2)$, our strategy is to show that the random matrices $(B_n, S_n)$ concentrate around their expectations $(B, S)$.  Two basic scenarios where such concentration results can be established are as follows:
\begin{enumerate}
\item  \textbf{Independence:} If the functions $f_t$ and $m_t$  are row-separable --- meaning the $i$-th output depends only on the $i$-th input ---  then the rows of $Y$ and $F(Y)$ are independent. Assuming finite second moments, the multivariate central limit theorem implies  that  $(B_n, S_n)$ can be approximated by their expectations plus a random fluctuation term of order $1/\sqrt{n}$.  Under stronger moment assumptions, sharper convergence rates can be established using matrix concentration inequalities, such as matrix Bernstein's inequality~\cite{tropp:2015matrix_concentration}.

\item \textbf{Lipschitz Continuity:} If each function $f_t$ and $m_t$ is Lipschitz continuous, then the matrices $Y$ and $F(Y)$ can be expressed as Lipschitz continuous functions of  i.i.d.\ standard Gaussian variables. This structure permits the application of Gaussian concentration inequalities to rigorously control the deviations and establish convergence rates.
\end{enumerate} 

For our next result, we use the second approach to provide a refined upper bound the coupling error that does not require any separability conditions on the functions.  This result is stated with respect to the deterministic $T \times T$ matrices
\begin{align*}
B \coloneqq  \tfrac{1}{n} \Sigma^{-1}  \ex{ (Y - M(Y))^\top F(Y)} , \qquad S& \coloneqq  \tfrac{1}{n}  \Omega^{-\top}  \ex{ F(Y)^\top F(Y)  }  \Omega^{-1} .
\end{align*}

\begin{theorem}\label{thm:XY}
Assume that $\Sigma$ is positive definite with Cholesky decomposition $\Sigma = \Omega^\top \Omega$ and the functions $f_t$, $g_t$, $m_t$ are Lipschitz with constants $L_f, L_g, L_m$ for all $t \in [T]$.  
Then, under the coupling $(X, Y)$ from Definition~\ref{def:coupling}, the event
\begin{align*}
\norm{X- Y} & \le  c \cdot (1 + 4L_f + L_g)^{T-1}  \\
& \quad \times   \big(  \psi_1   +L \,   \psi_2  +   L^3 (\sqrt{T} + \sqrt{r})   \big)\,  \norm{\Omega}_{2,1}
\end{align*}
holds with probability at least $1 -5 e^{ - r}$ for all $0 \le r \le n$ where $c$ is a universal positive constant, 
\begin{align*}
L & \coloneqq \log_2(2T) + \sqrt{T} (L_f + L_g + L_m) ( 1 + L_m)^{T-1} \kappa(\Sigma)^{1/2}\\
\psi_1 &\coloneqq \ex[\big]{ \norm{  M(Y) - G(Y) - F(Y)  B}_{2,1}}  \cdot \norm{\Omega}_{2,1}^{-1} \\
\psi_2 &\coloneqq   \sqrt{n} \cdot \normop{ S  - \Id }  
\end{align*}
and $\kappa(\Sigma)$ is the condition number of $\Sigma$.
\end{theorem}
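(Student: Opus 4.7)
The plan is to obtain Theorem~\ref{thm:XY} directly from Theorem~\ref{thm:Delta12} by proving that, on events of high probability,
\[
\Delta_1 \le \psi_1 + O\!\left(L^3(\sqrt T + \sqrt r)\right)
\qquad\text{and}\qquad
\Delta_2 \le L \psi_2 + O\!\left(L^3(\sqrt T + \sqrt r)\right),
\]
after which a union bound yields the statement. The argument rests on exposing the Lipschitz structure of the comparison process in the underlying Gaussian source. Writing $w = (\Omega \otimes \Id_n)\, z$ with $z \sim \normal(0, \Id_{nT})$ and unrolling $y_t = m_t(y_{<t}) + w_t$ by induction on $t$, the maps $z \mapsto Y(z)$, $z \mapsto F(Y(z))\,\Omega^{-1}$, and $z \mapsto M(Y(z))$ are all Lipschitz with constants controlled by the common scale $\alpha := (L_f + L_g + L_m)(1+L_m)^{T-1}\kappa(\Sigma)^{1/2}$, which is precisely the dominant term in the definition of $L$.

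For the $\Delta_2$ bound, I would decompose $\normop{S_n - \Id} \le \normop{S - \Id} + \normop{S_n - S}$; the first term equals $\psi_2/\sqrt n$ by the definition of $\psi_2$. For the second, since $S_n = \Phi^\top \Phi$ with $\Phi := F(Y)\Omega^{-1}/\sqrt n$ being $\alpha/\sqrt n$-Lipschitz in $z$, I would take a $1/4$-net $\cN$ of size at most $9^T$ on the unit sphere in $\bbR^T$ and reduce the operator norm to $\sup_{u \in \cN} |\norm{\Phi u}^2 - \E \norm{\Phi u}^2|$ up to a factor of two. Gaussian concentration for the $\alpha/\sqrt n$-Lipschitz quantity $\norm{\Phi u}$, combined with the identity $a^2 - b^2 = (a - b)(a+b)$ and the deterministic bound $\E \norm{\Phi u}^2 = u^\top S\, u = O(1)$, gives pointwise deviations of order $\alpha\sqrt{r/n} + \alpha^2 r/n$; a union bound over $\cN$ (replacing $r$ by $r + T \log 9$) then yields $\sqrt n\,\normop{S_n - S} \le O(\alpha (\sqrt T + \sqrt r))$, and multiplying by the $\log_2(4T)$ prefactor of $\Delta_2$ produces the desired bound $\Delta_2 \le L\psi_2 + O(L^2 (\sqrt T + \sqrt r))$.

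For the $\Delta_1$ bound, I would split via the triangle inequality,
\[
\Delta_1 \, \norm{\Omega}_{2,1} \le \norm{M(Y) - G(Y) - F(Y)\, B}_{2,1} + \norm{F(Y)(B_n - B)}_{2,1},
\]
and handle the two summands separately. The first is a composition that is Lipschitz in $z$ with constant $O(\alpha\sqrt T (1 + \normop{B}))$, and $\normop{B}$ is bounded deterministically by $\kappa(\Sigma)^{1/2}\normop{S}^{1/2}$ via Cauchy--Schwarz applied to the definition of $B$; standard Gaussian concentration of Lipschitz functions then controls its deviation from $\psi_1 \norm{\Omega}_{2,1}$ by $O(L^2 \sqrt r)$. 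The second summand is dominated by $\sqrt T\,\normop{F(Y)}\,\normf{B_n - B}$, where $\normop{F(Y)} \le \normf{F(Y)} = O(\sqrt n)$ on a high-probability event by Gaussian concentration of the $\alpha$-Lipschitz map $z \mapsto \normf{F(Y)}$.

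The hard part will be concentrating the entries of $B_n - B$ themselves, which are bilinear-in-Gaussian quantities of the form $n^{-1}\langle w_s, f_t(Y_{<t})\rangle - \E[\,\cdot\,]$. Their gradients in $z$ have norm $O\!\left(\alpha(\norm{w_s} + \norm{f_t(Y_{<t})})/n\right)$, which is not globally bounded. I would handle this by localization: Gaussian concentration applied to $\norm{w_s}$ and $\norm{f_t(Y_{<t})}$ (both $\alpha$-Lipschitz in $z$) shows that both are $O(\sqrt n)$ on a $1 - 2e^{-r}$ event, and on that event the effective Lipschitz constant of each bilinear form is $O(\alpha/\sqrt n)$. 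Applying a localized Gaussian concentration --- either via a McShane--Kirszbraun Lipschitz extension to globalize the bound off the truncated region, or equivalently via a direct Hanson--Wright argument for Lipschitz quadratic chaos --- then yields $\sqrt n \,\normf{B_n - B} = O(\alpha(\sqrt T + \sqrt r))$. Combined with $\normop{F(Y)}/\sqrt n = O(1)$, this produces the residual slack $O(L^3(\sqrt T + \sqrt r))$ as claimed. A bookkeeping of multiplicative constants produces the universal $c$, and a union bound over the $O(1)$ many concentration events invoked in the argument closes the proof.
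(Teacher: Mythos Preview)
Your high-level plan---reduce to Theorem~\ref{thm:Delta12}, split $\Delta_1 = \Delta_{1a} + \Delta_{1b}$, and control each of $\Delta_{1a},\Delta_{1b},\Delta_2$ by Gaussian concentration after exhibiting the Lipschitz dependence of $Y$ on the underlying standard Gaussian $Z$---is exactly what the paper does, and your treatment of $\Delta_{1a}$ and $\Delta_2$ matches the paper almost line for line (covering net, the $a^2-b^2$ trick, Cauchy--Schwarz bound on $\normop{\Omega B\Omega^{-1}}$ in terms of $\normop{S}^{1/2}$).

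The one substantive divergence is in the ``hard part'' $\Delta_{1b}$. You propose to concentrate the bilinear entries of $B_n - B$ by localization plus a Lipschitz extension (or a Hanson--Wright variant for Lipschitz chaos), which can be made to work but is fiddly: you have to control the difference between the expectation of the extended function and the true expectation, and the Hanson--Wright route needs a nonstandard version since one factor is a nonlinear Lipschitz image of the Gaussian. The paper sidesteps all of this with a single clean observation: form the augmented $n\times 2T$ matrix $H = [\,F(Y)\Omega^{-1}\;\; Z\,]$ and note that
\[
\tfrac{1}{n}H^\top H =
\begin{pmatrix}
S_n & (\Omega B_n\Omega^{-1})^\top \\[2pt]
\Omega B_n\Omega^{-1} & \tfrac{1}{n}Z^\top Z
\end{pmatrix}.
\]
Since $H$ is a globally Lipschitz function of $Z$ (with constant $L_H = 1 + L_f L_\phi\,\kappa(\Omega)$), every $\norm{Hu}$ is sub-Gaussian, and the \emph{same} covering-net argument you used for $S_n$ now simultaneously concentrates the off-diagonal block $\Omega B_n\Omega^{-1}$ around its mean. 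This gives $\Delta_{1b}$ and $\Delta_2$ from a single concentration event (which is also why the final probability is $1-5e^{-r}$ rather than something larger), and avoids any localization or quadratic-chaos machinery. Your route would close, but the augmented-matrix trick is the cleaner execution and worth internalizing.
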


We emphasize that Theorem~\ref{thm:XY} holds generally for any choice of $(f, g, m, \Sigma)$ satisfying the Lipschitz continuity assumption and the positive definiteness assumption. If the parameters of the comparison process are matched to the original sequence following  the state evolution construction in Definition~\ref{def:SE}, then Theorem~\ref{thm:XY} holds with $\psi_1 = \psi_2 = 0$ and Theorem~\ref{thm:intro} follows immediately as a corollary.

\subsubsection{Dependence on Number of Steps.}  
Under the assumed scaling on the functions,  $\norm{x_t}$  and $\norm{y_t}$ are  typically of order $\sqrt{n}$. For fixed $T$ and  bounded $\psi_1, \psi_2$ our result shows that the fluctuations of the coupling error $\norm{X- Y}$ are order one, independent of the dimension. 

Our findings can also be applied in regimes where $T$ increases with $n$. The bound on the coupling error remains meaningful as long as $n^{-1/2} \norm{X- Y}$  vanishes asymptotically. Assuming the Lipschitz constants are bounded uniformly with respect to $(T,n)$ the upper bound on the typical deviations increases at rate $\exp\{ c_L \, T\} \kappa(\Sigma)^3$ where $c_L $ depends on the Lipschitz constants. A sufficient condition for $n^{-1/2} \norm{X- Y}$ to converge to zero in probability is then given by 
\begin{align}
T  +  \ln(\kappa(\Sigma))  = O(  \log n).  \label{eq:scaling} 
\end{align} 

It is important to recognize the role of the condition number in this setting. For many applications of interest, the state evolution converges to a fixed point, and the rate of convergence is exponential in $T$. On the one hand, the rapid convergence means that a fixed number of iterations is often sufficient to get arbitrarily close to the fixed point. On the other hand, closeness to a fixed-point necessarily implies that $\Sigma$ is near degenerate. To see this observe that from \eqref{eq:yt}, we have $\frac{1}{n} \ex{ \norm{y_{t+1} - y_t}^2} \ge  \frac{1}{n} \ex{ \norm{ w_{t+1} - \ex{ w_{t+1} \mid w_{\le t}}}^2}$. If this lower bound converges to zero then the smallest eigenvalue of $\Sigma$ also converges to zero, causing the condition number to grow without bound.  

To interpret our result in the context of the basic AMP recursion, we  evaluate with $g_t = -  \sum_{s < t}  b_{st} f_s$ and $m_t \equiv 0$ where $b_{st}$ are specified according to \eqref{eq:bts}.  Assuming each $f_t$ has Lipschitz constant $L_f$, we can apply the general upper bound
\begin{align}
\Lip(g_t) \le   (  \sum_{s < t} |b_{st} |^2 )^{1/2}  L_f \le \sqrt{T} L_f^2.  \label{eq:Lg_bound} 
\end{align}
Evaluating Theorem~\ref{thm:XY} with these constants, the relevant scaling becomes
\begin{align*}
T \ln(T)   +  \ln(\kappa(\Sigma))  = O(  \log n). 
\end{align*} 
Ignoring the dependence on the condition number, this result is consistent with the scaling condition $ T = o( \log n / ( \log \log n))$, which was previously established for special cases of AMP~\cite{rush:2018}. The extra $\ln(T)$ dependence on $T$ can be eliminated under stronger assumptions on the functions. For example, if each function has fixed memory that does not increase with $T$ (as is often assumed in the AMP literature)  then \eqref{eq:Lg_bound} implies that $L_g  = O(L_f^2)$, recovering the condition in \eqref{eq:scaling}.

\subsubsection{Extension to Pseudo-Lipschitz Functions.}  
Prior work on AMP  \cite{berthier:2020} has considered  functions satisfying a  pseudo-Lipschitz  condition of the form
\begin{align*}
\norm{f_t(u ) - f_t(v ) } \le  L \| u - v \| \Big ( 1 + \big( \tfrac{1}{\sqrt{n}} \|u \|\big)^{\alpha-1}  + \big( \tfrac{1}{\sqrt{n}} \|v \|\big)^{\alpha-1} \Big) \qquad \text{for all $u, v \in (\bbR^n)^{t-1}$,}
\end{align*} 
for positive constant $L$ and  exponent  $\alpha \in [1, \infty)$. The case  $\alpha = 1$ corresponds to the standard Lipschitz condition, while larger values $\alpha > 1$ allow for a broader class of functions. 

Theorem 5 extends naturally to this broader class of pseudo-Lipschitz functions. The key observation is that  for any $\rho > 0$, the restrictions of a pseudo-Lipschitz function $f$ (or $g$) to the set $\cB_n(\rho)  = \{ u \in (\bbR^n)^T \mid \| u\| \le \sqrt{n} \rho\}$  is Lipschitz continuous, with a Lipschitz constant depending only on $(L, \rho)$.  By Kirszbraun's extension theorem, such functions admit Lipschitz continuous extensions to all of $(\bbR^n)^T$ with the same Lipschitz constant. 

We can therefore apply the coupling construction from Definition~\ref{def:coupling} to these Lipschitz extensions, obtaining a coupled pair $(X, Y)$ whose error is bounded according to Theorem~\ref{thm:XY}. Thanks to the structural properties of $Y$ and the fact that $\norm{X-Y}$ is bounded with high probability, it follows that for sufficiently large $\rho$, the events $\gvec(X)  \in \cB_n(\rho)$ and $\gvec(Y)  \in \cB_n(\rho)$ both hold with probability at least  $1 - c_0 e^{ - c_1 n}$, for some constants $c_0, c_1$.  Since the Lipschitz extensions agree with the original pseudo-Lipschitz functions on $\cB_n(\rho)$, the coupling established for the extensions is valid for the original system as well.

\subsection{Lower Bound via Wasserstein Distance}\label{sec:lower_bound}

In this section, we establish the tightness of the upper bounds by proving a complementary lower bound on the quadratic Wasserstein distance. This result demonstrates that, in general, the coupling error cannot be improved without imposing additional structural constraints.

The quadratic Wasserstein distance between square integrable probability measures $\mu$ and $\nu$ on $\bbR^d$ is defined as
\begin{align*}
W_2(\mu, \nu) \coloneqq \left( \inf_{\pi \in \Pi(\mu, \nu)}  \int \| u -  v\|^2 \, \dd\pi(u, v)  \right)^{1/2},
\end{align*}
where $ \Pi(\nu, \mu)$  denotes the set of all couplings of $\mu$ and $\nu$.  Moreover, there exists an optimal coupling that attains this infimum~\cite[Theorem~4.1]{villani:2008}. 

For the random elements such as  $X$ and $Y$ we use $\cL(X)$ and $\cL(Y)$ to denote their distributions. The quadratic Wasserstein distance satisfies the sandwich relation
\begin{align}
 \bigg( \sum_{t =1}^T \cW^2_2( \cL(x_t), \cL(y_t))  \bigg)^{1/2}  \le \cW_2( \cL(X), \cL(Y)) \le \sum_{t =1}^T \cW_2( \cL(x_t), \cL(y_t))  \label{eq:W2_sandwich} 
\end{align}
The lower bound follows from relaxing the constraints on the coupling so that it only needs to match the column marginals. The upper bound follows from the triangle inequality. 

To obtain an exact expression for the quadratic Wasserstein distance, we consider the simplified setting in which each $f_t$ is a constant function (i.e., a fixed point in $\bbR^n$ that does not depend on its input) and the functions $g_t$ and $m_t$ are linear. 

\begin{condition}\label{cond:LB} For each $t \in T$, $f_t$ is a constant function (i.e., it does not depend on its input) and $g_t$ and $m_t$ are linear functions of the form
\begin{align*}
 g_t(x_{<t}) =    \sum_{s < t} \lambda_{st}\,  x_{t}, \qquad m_t(y_{<t}) =    \sum_{s < t} \gamma_{st} \,  y_{t}
\end{align*}
for strictly upper triangular matrices  $\Lambda = (\lambda_{st})$ and $\Gamma = (\gamma_{st})$. 
\end{condition}

Under the recursive construction of parameters  in Definition~\ref{def:SE}, the coefficients $b_{st}$ are zero because the $f_t$ are nonrandom, and the matrices $(\Gamma, \Sigma)$ are given by $\Gamma   =\Lambda$ and  $\Sigma= \tfrac{1}{n} F^\top F$ for all $s, t \in [T]$. Note that Condition~\ref{cond:LB}  implies that $f_t, g_t, m_t$ are Lipschitz continuous with constants 
\begin{align*}
L_f = 0, \qquad L_g  = \max_{t \in [T]} \Big(  \sum_{s< t} \lambda_{st}^2\Big)^{1/2}, \qquad L_m = \max_{t \in [T]} \Big( \sum_{s< t} \gamma_{st}^2\Big)^{1/2}.
\end{align*}

\begin{theorem}\label{thm:couplingLB}
Assume that Condition~\ref{cond:LB} holds. Then,  the quadratic Wasserstein distance satisfies
\begin{align*}
W_2^2\left( \cL( x_t)   , \cL(y_t) \right)  
& = \tfrac{ n-1}{n} ( \sqrt{2} - 1)^2  \alpha^2_t + n\,  \Big( \big( 1 + \tfrac{ \sqrt{2} - 1}{n} \big) \alpha_t - \beta_t \Big)^2,
\end{align*}
for all $ t \in [T]$ where
\begin{align*}
\alpha_t \coloneqq  \sqrt{   [ ( \Id - \Lambda)^{-\top} \tfrac{1}{n} F^\top F  ( \Id - \Lambda)^{-1} ]_{tt}  }  , \qquad \beta_t \coloneqq  \sqrt{   [ ( \Id - \Gamma)^{-\top} \Sigma ( \Id - \Gamma)^{-1} ]_{tt} } .
\end{align*}
\end{theorem}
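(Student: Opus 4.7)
The plan is to exploit the full linearity of the dynamics under Condition~\ref{cond:LB} to obtain closed-form Gaussian marginals for $x_t$ and $y_t$, and then invoke the Bures--Wasserstein formula for centered Gaussians. Since each $f_t$ is a constant vector and $g_t, m_t$ are linear, the recursions assemble into matrix equations. Specifically, $X = AF + X\Lambda$ and $Y = W(\Id - \Gamma)^{-1}$, where $W = [w_1,\dots,w_T]$. As $\Lambda$ is strictly upper triangular, $\Id - \Lambda$ is invertible, giving $X = AF(\Id - \Lambda)^{-1}$. Reading off the $t$-th column yields $x_t = A \tilde{f}_t$ with $\tilde{f}_t \coloneqq \sum_{s\le t}[(\Id - \Lambda)^{-1}]_{st} f_s$, while $y_t = \sum_{s \le t} [(\Id - \Gamma)^{-1}]_{st}\, w_s$ is a jointly Gaussian linear combination.

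Next I would compute the two covariance matrices. A direct calculation, splitting diagonal and off-diagonal entries of $A$, shows that for $A \sim \mathsf{GOE}(n)$ and fixed $v \in \bbR^n$, $\cov(Av) = \tfrac{1}{n}\|v\|^2\, \Id_n + \tfrac{1}{n} v v^\top$; the rank-one correction encodes the symmetry $A_{ij} = A_{ji}$ together with the doubled diagonal variance $A_{ii} \sim \normal(0, 2/n)$. Consequently $x_t \sim \normal\bigl(0,\, \alpha_t^2(\Id_n + u_t u_t^\top)\bigr)$, where $u_t \coloneqq \tilde{f}_t/\|\tilde{f}_t\|$ and $\alpha_t^2 = \tfrac{1}{n}\|\tilde{f}_t\|^2 = \bigl[(\Id - \Lambda)^{-\top}\tfrac{1}{n}F^\top F\,(\Id - \Lambda)^{-1}\bigr]_{tt}$, which matches the definition of $\alpha_t$. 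For $y_t$, the scalar cross-covariance $\cov(w_s, w_{s'}) = \Sigma_{ss'} \Id_n$ immediately gives $\cov(y_t) = \beta_t^2 \Id_n$ with $\beta_t^2 = [(\Id - \Gamma)^{-\top}\Sigma(\Id - \Gamma)^{-1}]_{tt}$.

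With the marginals in hand, the Bures--Wasserstein formula $W_2^2(\normal(0,K_1),\normal(0,K_2)) = \gtr(K_1) + \gtr(K_2) - 2 \gtr\bigl((K_1^{1/2} K_2 K_1^{1/2})^{1/2}\bigr)$ applies. Because $\cov(y_t) = \beta_t^2 \Id_n$ is scalar, the cross term collapses to $2 \beta_t \gtr(\cov(x_t)^{1/2})$. The covariance $\alpha_t^2(\Id_n + u_t u_t^\top)$ has eigenvalue $2\alpha_t^2$ once (along $u_t$) and $\alpha_t^2$ with multiplicity $n-1$, so $\gtr(\cov(x_t)) = (n+1)\alpha_t^2$ and $\gtr(\cov(x_t)^{1/2}) = (\sqrt{2} + n - 1)\alpha_t$. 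Substituting yields $W_2^2 = (n+1)\alpha_t^2 + n\beta_t^2 - 2(n + \sqrt{2} - 1)\alpha_t \beta_t$.

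Finally, I would recast this expression into the announced form by completing the square in $\beta_t$. Extracting $n\bigl((1 + \tfrac{\sqrt{2}-1}{n})\alpha_t - \beta_t\bigr)^2$ and collecting the residual $\alpha_t^2$ coefficient uses the identity $n + 1 - n(1 + \tfrac{\sqrt{2}-1}{n})^2 = \tfrac{n-1}{n}(\sqrt{2}-1)^2$, which in turn reduces to $2(\sqrt{2}-1) + (\sqrt{2}-1)^2 = 1$. No real obstacle is anticipated: because the Bures formula is an equality for Gaussians, no approximation is incurred anywhere. The only place that genuinely requires care is retaining the rank-one correction in $\cov(Av)$; it is precisely this term that produces the $(\sqrt{2}-1)$ factor and the $\tfrac{n-1}{n}$ prefactor, and dropping it would wrongly suggest $x_t$ is isotropic and give a strictly smaller distance.
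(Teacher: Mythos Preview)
Your proposal is correct and follows essentially the same route as the paper: both solve the linear recursions to obtain $x_t = A\tilde f_t$ and $y_t$ as a linear combination of the $w_s$, identify the centered Gaussian marginals $\normal(0,\alpha_t^2(\Id_n+u_tu_t^\top))$ and $\normal(0,\beta_t^2\Id_n)$, apply the Bures--Wasserstein formula, and then complete the square. The only cosmetic differences are that the paper derives $\cov(x_t)$ via the commutation-matrix identity $\cov(\gvec(A))=\tfrac{2}{n}\Pi_n$ rather than by the direct entrywise calculation of $\cov(Av)$, and computes the cross-trace using $(\Id_n+u_tu_t^\top)^{1/2}=\Id_n+(\sqrt{2}-1)u_tu_t^\top$ rather than reading off eigenvalues.
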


From Theorem~\ref{thm:couplingLB}, we see that  the optimal specification of $(\Gamma, \Sigma)$ under the quadratic cost  is given by  $\Gamma = \Lambda$  and $ \Sigma = \frac{1}{n} (1 + \frac{ \sqrt{2} - 1}{n})^2 F^\top F$, which leads to 
\begin{align*}
W_2^2\left( \cL( x_t)   , \cL(y_t) \right)  & =\tfrac{n-1}{n} ( \sqrt{2} - 1)^2  \alpha^2_t  \quad \text{for all $t \in[T]$.} 
\end{align*}
For the matched setting $\Gamma = \Lambda$ and $\Sigma = \frac{1}{n} F^\top F$, the same result holds with the factor $(n-1)/n$  replaced by one.  

Combining Theorem~\ref{thm:couplingLB} with \eqref{eq:W2_sandwich} gives a general lower bound on the second moments of the error for any coupling of $X$ and $Y$. 

\begin{cor} 
Under Condition~\ref{cond:LB}, any coupling $(X, Y)$ satisfies the lower bound
\begin{align*}
\ex{ \norm{X - Y}^2 } & \ge \frac{n-1}{n} ( \sqrt{2} - 1)^2  \gtr\big(  ( \Id - \Lambda)^{-\top} \tfrac{1}{n} F^\top F  ( \Id - \Lambda)^{-1} \big).
\end{align*}
\end{cor}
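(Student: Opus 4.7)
The plan is to derive the corollary directly from Theorem~\ref{thm:couplingLB} together with the column-wise lower bound in the Wasserstein sandwich relation~\eqref{eq:W2_sandwich}. Since the statement is an expectation inequality that must hold for \emph{every} coupling $(X,Y)$, the natural route is to pass from any joint distribution on $(X,Y)$ to the marginal distributions of each column $x_t$ and $y_t$, then invoke the exact per-column Wasserstein formula.

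First, I would note that the Frobenius norm decomposes by columns, giving
\begin{align*}
\ex{\norm{X - Y}^2} = \sum_{t=1}^{T} \ex{\norm{x_t - y_t}^2}.
\end{align*}
For each fixed $t \in [T]$, the joint law of $(x_t, y_t)$ induced by the coupling is a particular element of $\Pi(\cL(x_t),\cL(y_t))$, so by the definition of the quadratic Wasserstein distance,
\begin{align*}
\ex{\norm{x_t - y_t}^2} \ge W_2^2\bigl(\cL(x_t), \cL(y_t)\bigr).
\end{align*}
Summing over $t$ reproduces the lower bound in~\eqref{eq:W2_sandwich}, namely
\begin{align*}
\ex{\norm{X-Y}^2} \ge \sum_{t=1}^{T} W_2^2\bigl(\cL(x_t),\cL(y_t)\bigr).
\end{align*}

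Next, I would substitute the exact expression from Theorem~\ref{thm:couplingLB}. Since the term $n\bigl((1+\tfrac{\sqrt{2}-1}{n})\alpha_t - \beta_t\bigr)^2$ is nonnegative, it can be dropped to give the valid lower bound
\begin{align*}
W_2^2\bigl(\cL(x_t),\cL(y_t)\bigr) \ge \tfrac{n-1}{n}(\sqrt{2}-1)^2 \alpha_t^2.
\end{align*}
Summing and using the definition $\alpha_t^2 = [(\Id-\Lambda)^{-\top} \tfrac{1}{n} F^\top F (\Id-\Lambda)^{-1}]_{tt}$, the sum $\sum_{t=1}^T \alpha_t^2$ is exactly the trace of this $T\times T$ matrix, yielding the claimed inequality.

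There is no real obstacle here: the corollary is essentially a repackaging of Theorem~\ref{thm:couplingLB} via two elementary facts (Frobenius-norm column decomposition and the variational definition of $W_2$). The only minor point worth verifying is that the lower bound $\tfrac{n-1}{n}(\sqrt{2}-1)^2\alpha_t^2$ holds irrespective of the choice of $\Sigma$ (and hence $\beta_t$) used to define the comparison process, which follows because the dropped term is a square and the remaining term depends only on $\Lambda$ and $F$.
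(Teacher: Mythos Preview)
Your proposal is correct and matches the paper's intended argument: the paper simply states that the corollary follows by combining Theorem~\ref{thm:couplingLB} with the lower bound in~\eqref{eq:W2_sandwich}, and your column-wise decomposition together with the variational definition of $W_2$ is exactly how that combination is carried out.
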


These results show that the  quadratic Wasserstein distance is bounded away from zero for all $n$, and thus it is not possible for the coupling error to have vanishing second moments. The following example demonstrates that the optimal coupling error can scale exponentially with $T$  even for well-behaved Lipschitz continuous functions.

\begin{example}[Autoregressive Process]  Suppose that  $f_t$ are constant functions and  $g_t(x_{<t}) = \lambda \, x_{t-1}$ for  $\lambda \in \bbR$. Then,  $\Lambda $ and $(\Id - \Lambda)^{-1}$ are upper triangular matrices given by 
\begin{align*}
\Lambda = \begin{bmatrix} 
0 &  \lambda & 0 & \cdots & 0  \\
  & 0 & \lambda & \ddots & \vdots  \\
  &   & \ddots &\ddots  &0  \\
&&&0  & \lambda \\
   &  &&  &  0 
\end{bmatrix}, \qquad 
 ( \Id - \Lambda)^{-1}  = 
  \begin{bmatrix} 
1 & \lambda & \lambda^2   & \cdots & \lambda^{T-1}  \\
 & 1 & \lambda  & \cdots & \lambda^{T-2}  \\
& & 1 & \cdots & \lambda^{T-3}  \\
&&&\ddots \\
  &   &    &  &  1
\end{bmatrix} 
\end{align*}
Assuming $\Sigma = \Id$, it follows that 
\begin{align*}
\alpha_t & = \sum_{s=0}^{t-1} \lambda^{2 s} = \begin{dcases}  \frac{ \lambda^{2t} - 1}{ \lambda^{2} - 1}, &  \lambda \ne 1 \\
t & | \lambda| = 1
\end{dcases}
\end{align*}
This sequence grows exponentially with  $t$ for all $|\lambda| > 1$. 
\end{example}

\section{Proofs of Main Results}

\subsection{The Gaussian Conditioning Approach}\label{sec:gaussian_cond}
The Gaussian conditioning approach was introduced by  Bayati and Montanari~\cite[Lemma~10]{bayati:2011}  to show that the AMP recursion in \eqref{eq:amp} can be expressed as a conditionally Gaussian sequence --- that is the increments are Gaussian with a mean and variance that may depend on the past states. The key idea is to analyze the conditional distribution of the random matrix $A$ given the history $x_{<t}$, and then use this to determine the conditional distribution of the next iterate $x_t$.  A detailed derivation of this argument is presented in  Feng et al.~\cite[Appendix~A.2]{feng:2022}.  Related decomposition techniques have also appeared previously in the context of discrete-time filtering \cite{liptser:2001_applications}

In this section, we present a self-contained derivation of the Gaussian conditioning approach in a general setting, which includes the generalized-first order method in \eqref{eq:xt} as a special case.  We begin by recalling a basic but important fact: the conditional distributions of jointly Gaussian random variables are Gaussian, even in the presence of a degenerate (i.e., singular) joint covariance matrix.

\begin{lemma}[{$\!\!$\cite[Theorem~13.2]{liptser:2001_applications}}] \label{lem:z_given_x}
Let $(\theta, \xi)$ be jointly Gaussian random finite-dimensional vectors with means $m_x = \ex{ \theta}$ and  $m_\xi = \ex{ \xi}$ and covariance matrices $C_{\theta \theta} = \cov(\theta,\theta)$, $C_{\theta \xi} = \cov(\theta, \xi)$, and $C_{\xi \xi} = \cov(\xi,\xi)$. Then the conditional distribution of $\theta$ given $\xi$ is Gaussian with mean and covariance:
\begin{align*}
\ex{ \theta \mid \xi} = m_\theta  +C_{\theta \xi}  C_{\xi \xi }^+ ( \xi -  m_\xi ) , \qquad \cov(\theta \mid \xi) = C_{\theta \theta} - C_{\theta \xi} C_{\xi \xi }^+ C_{\xi \theta} 
\end{align*}
where $(\cdot)^+$ denotes the Moore-Penrose pseudoinverse. 
\end{lemma}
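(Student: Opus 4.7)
The plan is to reduce the statement to the observation that for jointly Gaussian vectors, zero correlation implies independence, and then to carry out the standard Gram--Schmidt type construction --- with extra care for the degenerate case, which is where the pseudoinverse enters.

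First, I would introduce the auxiliary random vector
\begin{align*}
\eta \coloneqq \theta - m_\theta - C_{\theta \xi} C_{\xi\xi}^+ (\xi - m_\xi).
\end{align*}
Because $\eta$ is an affine function of $(\theta,\xi)$, the pair $(\eta,\xi)$ is jointly Gaussian. I would then compute $\cov(\eta,\xi) = C_{\theta\xi} - C_{\theta\xi} C_{\xi\xi}^+ C_{\xi\xi}$. The key algebraic fact I would establish is the identity
\begin{align*}
C_{\theta\xi} C_{\xi\xi}^+ C_{\xi\xi} = C_{\theta\xi},
\end{align*}
which I expect to be the main obstacle since it is the one place where positive semi-definiteness of the joint covariance is used rather than an ambient invertibility assumption. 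I would argue it by observing that for any $v$ in the null space of $C_{\xi\xi}$, we have $\var(v^\top \xi) = 0$, so $v^\top \xi$ is almost surely equal to its mean, which forces $C_{\theta\xi} v = \cov(\theta, v^\top \xi) = 0$. Hence the range of $C_{\xi\theta}$ is orthogonal to the null space of $C_{\xi\xi}$, i.e.\ contained in the column space of $C_{\xi\xi}$; standard Moore--Penrose properties then give the identity.

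With $\cov(\eta,\xi) = 0$ established, joint Gaussianity of $(\eta,\xi)$ implies independence of $\eta$ and $\xi$. Therefore the conditional law of $\eta$ given $\xi$ coincides with its marginal law, which is Gaussian with mean $0$ and covariance
\begin{align*}
\cov(\eta,\eta) = C_{\theta\theta} - C_{\theta\xi} C_{\xi\xi}^+ C_{\xi\theta} - C_{\theta\xi} C_{\xi\xi}^+ C_{\xi\theta} + C_{\theta\xi} C_{\xi\xi}^+ C_{\xi\xi} C_{\xi\xi}^+ C_{\xi\theta} = C_{\theta\theta} - C_{\theta\xi} C_{\xi\xi}^+ C_{\xi\theta},
\end{align*}
where the simplification uses the identity $C_{\xi\xi}^+ C_{\xi\xi} C_{\xi\xi}^+ = C_{\xi\xi}^+$ together with the previous identity and symmetry.

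Finally, I would write $\theta = \eta + m_\theta + C_{\theta\xi} C_{\xi\xi}^+ (\xi - m_\xi)$. Conditionally on $\xi$, the last two summands are deterministic while the first is Gaussian with the computed mean and covariance. Reading off the conditional mean and covariance of $\theta$ then gives exactly the two claimed formulas. A small closing remark would note that the result does not require $\xi$ itself to have a density --- it holds intrinsically on the affine subspace $m_\xi + \operatorname{range}(C_{\xi\xi})$ where $\xi$ is almost surely supported --- which is precisely the reason the pseudoinverse, rather than an inverse, appears in the statement.
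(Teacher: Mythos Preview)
Your proof is correct and follows the standard approach: introduce the residual $\eta = \theta - m_\theta - C_{\theta\xi} C_{\xi\xi}^+ (\xi - m_\xi)$, verify that $\cov(\eta,\xi) = 0$ by the range--kernel argument that handles the degenerate case, invoke that uncorrelated jointly Gaussian vectors are independent, and read off the conditional law. The computation of $\cov(\eta,\eta)$ and the use of the Moore--Penrose identities are all fine.

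There is nothing to compare against in the paper: the lemma is stated there as a citation to \cite[Theorem~13.2]{liptser:2001_applications} and is not proved in the paper itself. Your argument is essentially the textbook proof of that cited result, with the pseudoinverse handled carefully.
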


To proceed, we consider a generalization of \eqref{eq:xt} that allows for general linear operators applied to the matrix. Specifically, we consider the system 
\begin{align}
\xi_t = F_t \, \theta  +  g_t    \label{eq:xi}
\end{align}
where $\xi \in \bbR^n$ is the state at time $t \in \bbN$,  $\theta \sim \normal(0, \Id_N)$ is a standard Gaussian random vector, and $F_t \in \bbR^{n \times N}$ and $g_t \in \bbR^n$ are measurable functions of the states $(\xi_1, \dots, \xi_{t-1})$.  We define $\xi_{\le t} $, $F_{\le t}$ and $g_{\le t}$ to be the vertical stackings of the first $t$ terms, i.e.,  
\begin{align*}
\xi_{\le t} \coloneqq  \begin{bmatrix} \xi_1 \\ \vdots \\ \xi_t \end{bmatrix}  , \qquad 
F_{ \le t} \coloneqq  \begin{bmatrix} F_1 \\ \vdots \\ F_t \end{bmatrix}  , \quad \qquad g_{ \le t} \coloneqq  \begin{bmatrix} g_1 \\ \vdots \\ g_t \end{bmatrix}, 
\end{align*}

\begin{lemma}[Gaussian Conditioning]\label{lem:gaussian_cond} 
For each $ t \in \bbN$, the conditional distribution of $(\theta, \xi_t)$ given $\xi_{< t}$ is Gaussian with condition mean
\begin{align*}
\ex{ \theta  \mid \xi_{< t} } & = F_{< t}^+ ( \xi_{< t} - g_{< t}) \\
  \ex{ \xi_t \mid \xi_{ < t}}  & = F_t  F_{<t}^+ ( \xi_{< t} - g_{< t})  +g_t
\end{align*}
and conditional covariance 
\begin{align*}
 \cov( \theta, \theta  \mid \xi_{<  t} )  & = \Id  - F_{<  t}^+ F_{<  t} \\
  \cov( \theta , \xi_t \mid \xi_{< t} ) & = (  \Id  - F_{< t}^+ F_{ <  t})F_t^\top \\
 \cov( \xi_t , \xi_t \mid \xi_{< t} ) & = F_t (  \Id  - F_{< t}^+ F_{ <  t})F_t^\top.
\end{align*}
\end{lemma}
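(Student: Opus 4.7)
I would prove the statement by induction on $t$, reducing to Lemma~\ref{lem:z_given_x} together with a single block pseudoinverse identity. The key observation is that, conditional on $\xi_{<t}$, the matrix $F_t$ and vector $g_t$ are deterministic (they are $\sigma(\xi_{<t})$-measurable), so $\xi_t = F_t\theta + g_t$ becomes affine in $\theta$ with frozen coefficients. Consequently, if the inductive hypothesis furnishes a Gaussian conditional distribution for $\theta \mid \xi_{<t}$, then $(\theta, \xi_t) \mid \xi_{<t}$ is jointly Gaussian and we may invoke Lemma~\ref{lem:z_given_x} one more time to pass to $\theta \mid \xi_{\le t}$.

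The base case $t = 1$ is immediate: $F_1$ and $g_1$ are deterministic, and with the convention that $F_{<1}$-dependent terms vanish one has $\ex{\theta}=0$, $\cov(\theta,\theta) = \Id$, $\ex{\xi_1}= g_1$, $\cov(\theta,\xi_1) = F_1^\top$, $\cov(\xi_1,\xi_1) = F_1 F_1^\top$, matching the claimed formulas. In the inductive step, set $P \coloneqq \Id - F_{<t}^+ F_{<t}$ (the orthogonal projector onto $\ker F_{<t}$). Applying Lemma~\ref{lem:z_given_x} pointwise in $\xi_{<t}$, together with the inductive hypothesis at time $t$, yields
\begin{align*}
\ex{\theta \mid \xi_{\le t}} &= F_{<t}^+(\xi_{<t} - g_{<t}) + P F_t^\top (F_t P F_t^\top)^+ \bigl(\xi_t - F_t F_{<t}^+(\xi_{<t} - g_{<t}) - g_t\bigr), \\
\cov(\theta,\theta \mid \xi_{\le t}) &= P - P F_t^\top (F_t P F_t^\top)^+ F_t P.
\end{align*}

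The main obstacle is verifying that these right-hand sides coincide with $F_{\le t}^+(\xi_{\le t} - g_{\le t})$ and $\Id - F_{\le t}^+ F_{\le t}$, respectively. This is a block pseudoinverse identity for the stacked matrix $F_{\le t} = [F_{<t}; F_t]$ which I would prove either by checking the four Moore--Penrose conditions directly, or, more conceptually, by observing that $\mathrm{range}(F_{\le t}^\top) = \mathrm{range}(F_{<t}^\top) \oplus \mathrm{range}(P F_t^\top)$ is an orthogonal direct sum (a block Gram--Schmidt decomposition), so $F_{\le t}^+$ acts as $F_{<t}^+$ on the first summand and as $(F_t P)^+ = P F_t^\top (F_t P F_t^\top)^+$ on the second. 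The combinatorics of this identity is what has to be done carefully, especially to cover rank-deficient configurations of the $F_t$.

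Finally, the three formulas for $\xi_t$ in the statement require no separate induction: since $\xi_t = F_t\theta + g_t$ with $F_t, g_t$ constant under $\sigma(\xi_{<t})$, the relations $\ex{\xi_t \mid \xi_{<t}} = F_t \ex{\theta \mid \xi_{<t}} + g_t$, $\cov(\theta,\xi_t \mid \xi_{<t}) = \cov(\theta,\theta \mid \xi_{<t})\, F_t^\top$, and $\cov(\xi_t,\xi_t \mid \xi_{<t}) = F_t\, \cov(\theta,\theta \mid \xi_{<t})\, F_t^\top$ follow mechanically from the formulas for $\theta \mid \xi_{<t}$. Thus the full set of identities propagates from the single inductive claim on the posterior of $\theta$ together with the block pseudoinverse identity above.
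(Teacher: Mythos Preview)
Your proposal is correct and follows essentially the same route as the paper: induction on $t$, freezing $(F_t,g_t)$ as $\sigma(\xi_{<t})$-measurable, applying Lemma~\ref{lem:z_given_x} to update the posterior on $\theta$, and then closing the loop via the block pseudoinverse identity coming from the orthogonal decomposition $\operatorname{range}(F_{\le t}^\top)=\operatorname{range}(F_{<t}^\top)\oplus\operatorname{range}(PF_t^\top)$. The only cosmetic difference is that the paper introduces an auxiliary $\theta'\sim\normal(0,\Id)$ independent of $\xi_{<t}$ (via $\theta = F_{<t}^+(\xi_{<t}-g_{<t}) + P\theta'$) before invoking Lemma~\ref{lem:z_given_x}, whereas you apply the lemma directly to the conditionally Gaussian pair $(\theta,\xi_t)$; the resulting intermediate expressions and the projector identity $F_{<t}^+F_{<t}+(F_tP)^+(F_tP)=F_{\le t}^+F_{\le t}$ are identical.
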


\begin{proof}
 For this proof we will verify that the  conditional distribution of $\theta$ given $\xi_{<t}$  is Gaussian with the stated mean and covariance. The result for the joint conditional distribution of $(\theta, \xi_t)$ given $\xi_{<t}$ then follows directly from \eqref{eq:xi} and the fact that $(F_t, g_t)$ are measurable functions of $\xi_{<t}$. 

We proceed by induction on $t$. The  base case ($t= 1$) follows immediately from noting that   $\xi_1 = F_1 \theta + g_1$ where  $F_1$ and $g_1$ are deterministic.

For the inductive step, assume that for time $t \in \bbN$, the conditional distribution of $\theta$ given $\xi_{<t}$ is  Gaussian with mean $F_{< t}^+ ( \xi_{< t} - g_{< t})$ and covariance $ (\Id-  F_{< t}^+  F_{< t}  )$.  This implies that  $(\theta, \xi_{< t})$ are jointly Gaussian, and thus there exists a decomposition of the form
\begin{align}
\theta  =F_{< t}^+  ( \xi_{< t} - g_{< t}) +   (\Id-  F_{< t}^+  F_{< t}  ) \theta'   
\label{eq:z_to_zp}
\end{align}
where $\theta' \sim \normal(0, \Id_N)$ is independent of $\xi_{<t}$. Introducing the notation
\begin{align*}
\bar{F}_t \coloneqq F_t  (\Id - F_{< t}^+  F_{< t}  ), \qquad  \bar{g}_t \coloneqq F_t F_{< t}^+   ( \xi_{< t} - g_{<t})  + g_t
\end{align*}
the next term in the sequence can be expressed as
\begin{align*}
\xi_{ t}  &=   F_t \,  \theta + g_t    
= \bar{F}_t \theta' + \bar{g}_t. 
\end{align*}
At this point, the crucial observation is that both  $\bar{F}_t$ and $\bar{g}_t$ are  measurable with respect to $\xi_{< t}$. Thus, we may condition on $\xi_{<t}$ and apply Lemma~\ref{lem:z_given_x} to the pair $(\theta', \xi_t)$ to see that 
\begin{align*}
\theta' \mid \xi_{\le t}  \sim \normal\left( \bar{F}_{ t}^+  (\xi_{t} - \bar{g}_{t}) , \Id -  \bar{F}_{ t}^+ \bar{F}_{ t} \right), 
\end{align*}
where we have used the property $M (M^\top M)^+ = M^+$ for any matrix $M$. 
Substituting back into  \eqref{eq:z_to_zp}, it follows that $\theta$ given $\xi_{\le t}$ is conditionally Gaussian with
\begin{subequations}
\begin{align}
\ex{ \theta  \mid x_{\le t} }& = F_{< t}^+  ( \xi_{< t} - g_{< t}) +   (\Id-  F_{< t}^+  F_{< t}  )  \bar{F}_{ t}^+  (\xi_{t} - \bar{g}_{t}) \label{eq:Ezgx_alt} \\
 \cov( \theta \mid \xi_{\le t} ) & =  (\Id-  F_{< t}^+  F_{< t}  )  ( \Id -  \bar{F}_{ t}^+ \bar{F}_{ t} )  (\Id-  F_{< t}^+  F_{<t}).  \label{eq:Czgx_alt} 
\end{align}
\end{subequations}

To complete the proof of the inductive step, it remains to verify that these expressions for the mean and covariance match the ones in the statement of the result. By construction, the row spaces of $F_{<t}$ and $\bar{F}_t$ are orthogonal and together span the row space of $F_{\le t}$. Hence, the orthogonal projection matrices onto these subspaces satisfy
\begin{align}
F_{< t}^+  F_{ < t}  +  \bar{F}_{ t}^+ \bar{F}_{ t}   = \bar{F}_{\le  t}^+ \bar{F}_{\le t}.  \label{eq:proj_sum} 
\end{align}
Using this relation,  the conditional variance in  \eqref{eq:Czgx_alt}  simplifies to  $ \cov( \theta \mid \xi_{\le t} )   = \Id -  F_{\le t}^+ F_{\le  t}$, as desired. 

The orthogonality of the row spaces of $F_{<t}$ and $\bar{F}_t$ also implies $ ( \Id - F^+_{<t}  F_{< t} )  \bar{F}_{ t}^+  = 0$ and  $\bar{F}_t^+ F_t  F_{< t}^+  =  0  $, which can be used to simplify the expression for the mean in \eqref{eq:Ezgx_alt}. In particular, we obtain
 \begin{align*}
\ex{ \theta \mid \xi_{\le t} }  
 = \begin{bmatrix}   F_{< t}^+ & \bar{F}_{ t}^+ \end{bmatrix}  (\xi_{\le t} - g_{\le t}). 
\end{align*}
Since  $\xi_{\le t} - g_{\le t}$ lies the column space of $F_{\le t}$, we can write
\begin{align*}
 \begin{bmatrix}   F_{< t}^+ & \bar{F}_{ t}^+ \end{bmatrix}  (\xi_{\le t} - g_{\le t})  
& = \begin{bmatrix}   F_{< t}^+ & \bar{F}_{ t}^+ \end{bmatrix}  F_{\le t} F_{\le t}^+  (\xi_{\le t} - g_{\le t})   \\
& =  ( F_{<t}^+ F_{<t}  + \bar{F}^+_t F_t )   F_{\le t}^+  (\xi_{\le t} - g_{\le t})   \\
& =  ( F_{<t}^+ F_{<t}  + \bar{F}_t^+ \bar{F}_t )   F_{\le t}^+  (\xi_{\le t} - g_{\le t})   \\
& = F_{\le t}^+   (\xi_{\le t} - g_{\le t}),   
\end{align*}
where the third step follows from the decomposition $F_t = \bar{F}_t + F_{t} F_{<t}^+ F_{< t}$, and the final step follows from \eqref{eq:proj_sum} and the identity $M^+ MM^+ = M^+$ for any matrix $M$.  Combining these displays yields  $\ex{ \theta \mid \xi_{\le t} } = F_{\le t}^+   (\xi_{\le t} - g_{\le t})$,  completing the proof of the inductive step. 
\end{proof}

\subsection{Proof of Coupling Construction (Theorem~\ref{thm:coupling})}
In this proof, we first show that construction in Definition~\ref{def:coupling} can be expressed as a special case of the general system in  \eqref{eq:xi}. We then use orthogonality properties of the matrix-valued functions to verify that $(z_t)$ has independent standard Gaussian increments. 

Define the normalized variables $u_t = n^{-1/2} q_t$ and the matrices  $D_t, E_t \in \bbR^{n \times n} $ according to 
\begin{align*}
 D_t \coloneqq \Id_n -  \frac{1}{2 } u_t u_t^\top   -    \sum_{s < t} u_s u_s^\top, \qquad E_t \coloneqq  \frac{1}{2 } u_t e_t^\top  +  \sum_{s < t} u_s e_s^\top,
\end{align*}
where $e_t$ denotes the $t$-th standard basis vector in $\bbR^n$. Then, the update in \eqref{eq:coupling_zt} can be rewritten as
\begin{align}
z_t & = \sqrt{n} \Big(   A u_t  + \frac{1}{2} ( A'_{tt} -   \langle u_t, A u_t \rangle  ) \,  u_t +   \sum_{s  < t} ( A'_{st} - \langle u_s, A u_t \rangle  )  \, u_s \Big)  \notag \\
& = \sqrt{n} \left(    D_t  A u_t  + E_t A' u_t   \right)  \notag \\
& =  \sqrt{n} \begin{bmatrix}  (u_t^\top \otimes  D_t )  &   ( e_t^\top  \otimes   E_t )   \end{bmatrix}  
\begin{bmatrix} \gvec(A) \\ \gvec(A') \end{bmatrix}  \label{eq:ztxt}
\end{align}
The vectorization $[A,A']$ can be constructed from a standard Gaussian vector  $\theta \sim \normal(0, \Id_N)$ of length $N  = 2n$ according to 
\begin{align*}
\begin{bmatrix} \gvec(A) \\ \gvec(A') \end{bmatrix} = \sqrt{\frac{2}{n} }  \begin{pmatrix} \Pi_n & 0 \\ 0 & \Pi_n \end{pmatrix}  \theta
\end{align*}
where $\Pi_n$ denotes the orthogonal projection  matrix on $\bbR^{n^2}$  satisfying $\Pi_n \gvec(M)  = \frac{1}{2} \gvec(M + M^\top)$ for any $M \in \bbR^{n \times n}$  \cite[pg.~56]{magnus:2007}. Plugging this expression back into \eqref{eq:ztxt}, we see that  \eqref{eq:coupling_zt} can be expressed in the form $z_t = F_t\,  \theta$ where 
\begin{align*}
F_t \coloneqq   \sqrt{2 } \begin{bmatrix}  (u_t^\top \otimes  D_t )  \Pi_n &   ( e_t^\top  \otimes   E_t )  \Pi_n  \end{bmatrix}  
\end{align*}
is a  measurable function of $(z_1, \dots, z_{t-1})$. 

By Lemma~\ref{lem:gaussian_cond}, the conditional distribution of $z_t$ given $z_{<t}$ is Gaussian with mean and variance specified by the matrix  sequence $(F_s)_{s \le t}$. To show that $(z_t)$ has independent standard Gaussian increments, we need to verify that the conditional mean is zero and the conditional variance is the identity matrix.  For $1 \le s \le  t$ we  use the properties  $\Pi^2_n = \Pi_n$ and 
\begin{align*}
(v_1^\top \otimes M_1) \Pi_n (v_2  \otimes M_2^\top)  = \frac{1}{2}  v_1^\top v_2 M_1 M_2^\top +  \frac{1}{2} M_1 v_2  v_1^\top M_2^\top 
\end{align*}
for all $v_1, v_2 \in \bbR^k,  M_1, M_2 \in \bbR^{m \times k} $ and the fact that $\sum_{t = 1}^{T} u_t u_t^\top  = \Id$ to see that, with probability one, 
\begin{align*}
F_s F_t^\top & =  2   (u_s^\top  \otimes  D_s )  \Pi_n  (u_t \otimes  D_t )  +   2 ( e_s^\top  \otimes   E_s )  C_T     ( e_t  \otimes   E_t^\top )  \\
 & =   u^\top_s  u_t   D_s D_t  + D_s u_t   u_s^\top D_t    +  e^\top_s  e_t   E_s^\top  E_t  +   E_s e_t  e_s^\top  E_t^\top \\
 & =  \one_{s = t}    \left[    D^2_t  + D_t u_t   u_t^\top D_t    +    E_t   E_t^\top  +   E_t e_t  e_t^\top  E_t^\top  \right] \\
  & =  \one_{s = t}  \Big[   \tfrac{1}{4 } u_t u_t^\top + \sum_{s > t} u_s u_s^\top     +   \tfrac{1}{4 } u_t u_t^\top  +  \sum_{s < t} u_s u_s^\top +  \tfrac{1}{4 } u_t u_t^\top   \Big]   \\
&   = \one_{ s = t}  \Id_n. 
\end{align*}
Hence, the row spaces of $F_1, \dots, F_t$ are orthogonal, and it follows that $z_t \sim \normal(0, \Id_n)$ is independent of $z_{<t} $. The condition $\Sigma = \Omega^\top \Omega$ ensures that   $(w_t)_{t \in [T]}$ is a Gaussian process with mean zero and covariance defined by $\Sigma = \Omega \Omega^\top$, and this verifies that $(y_t)_{t \in [T]}$ has the distribution specified by \eqref{eq:yt}.

The remaining step to verify the identity in \eqref{eq:ztqt_coupling}. We use \eqref{eq:coupling_zt} to see that
\begin{align*}
\langle z_s  , q_t \rangle 
= \begin{dcases}  \langle q_s, A q_t \rangle    & s  < t\\
\frac{1}{2} ( \langle q_t , A q_t \rangle + \sqrt{n} A'_{tt})   & s = t\\
n A'_{st}  &   s > t
\end{dcases}
\end{align*}
Making these substitutions, leads to 
\begin{align*}
z_t & = A q_t  + \frac{1}{2} ( A'_{tt} -  \tfrac{1}{n} \langle q_t, A q_t \rangle  ) \,  q_t +   \sum_{s  < t} ( A'_{st} -  \tfrac{1}{n} \langle q_s, A q_t \rangle  )  \, q_s \\
& = A q_t  +  ( A'_{tt} -  \tfrac{1}{n} \langle q_t, A q_t \rangle  )\,  q_t +   \sum_{s  < t} (A'_{st}  - \tfrac{1}{n} \langle z_s  , q_t \rangle   )  \, q_s \\
& = A q_t  -  \frac{1}{n} \sum_{s \le t} \langle  z_s  , q_t \rangle \,  q_s   +  \sum_{s  \le t} A'_{st}  q_s
\end{align*}
Since $\frac{1}{n} \langle  z_r  , q_t \rangle = A'_{rt}$ for all $r > t$,  the equality remains valid if these terms are included in the summations. \qed

\subsection{Proof of General Upper Bound (Theorem~\ref{thm:Delta12})}

In this proof, we bound the coupling error $\|X - Y\|$ under the construction given in Definition~\ref{def:coupling}. The proof proceeds in four main steps.

\paragraph{Stability Bound.}

Let $h_t = A f_t + g_t$ denote the full update function for the generalized first-order method. By the assumption that  $f_t$ and $g_t$ are Lipschitz with constants $L_f$ and $L_g$, we have
\begin{align*}
\Lip(h_t) \le  L_h  \coloneqq  \normop{A} L_f + L_g. 
\end{align*}
Applying the discrete-time stability bound in Lemma~\ref{lem:stability2} yields
\begin{align}
\norm{X- Y} \le   (1+ L_h)^{T-1}  \norm{Y -  A F(Y) - G(Y)}_{2,1} .
 \label{eq:XYtoDelta}
\end{align}
 Thus, the problem has been reduced to bounding the random variables $\normop{A}$ and  $\norm{Y -  A F(Y) - G(Y)}_{2,1} $.

\paragraph{Decomposition of Coupling Error.}

Let $Q = [q_1, \dots, q_T]$ and $Z = [z_1, \dots, z_T]$ be the $n \times T$ matrices generated by
\eqref{eq:coupling_qt} and \eqref{eq:coupling_zt}, respectively. Recall that the columns of $Q$ are obtained by Gram-Schmidt orthogonalization to the columns  $F$ with normalization  $\norm{q_t} = \sqrt{n}$. Thus, we have the QR decomposition $F(Y)= QR $  where $R$ is a $T \times T$  upper triangular matrix with non-negative diagonal entries defined by  $ R \coloneqq \tfrac{1}{n} Q^\top F$ 

Using the identity derived from \eqref{eq:ztqt_coupling} with $r =T$, we have 
\begin{align*}
Z = A Q -  \tfrac{1}{n} Q  Z^\top Q   + Q  A'',
\end{align*}
where  $A'' \in \bbR^{T \times T} $ is leading principal submatrix of $A'$. Multiplying both sides on the right by $R$ gives
\begin{align}
Z R = A F(Y) -  \tfrac{1}{n} Q  Z^\top Q R   + Q  A''  R  \label{eq:ZQ_coupling2} 
\end{align}
Comparing with $Y = M(Y) + Z \Omega$, we obtain 
\begin{align*}
Y - A F(Y)  - G(Y) &  =M(Y) - G(Y)     - \tfrac{1}{n} Q  Z^\top Q R  +  Z(\Omega - R)  + Q  A''  R.
\end{align*}
We now define the error terms: 
\begin{align*}
\Delta_1 & = \norm{M(Y) - G(Y) - F(Y) B_n}_{2,1} \cdot \norm{\Omega}_{2,1}^{-1}, \qquad \widetilde{\Delta}_2  = \normop{ R \Omega^{-1} - \Id}
\end{align*}
where we recall that $B_n = \frac{1}{n} \Omega^{-1} Z^\top QR$.

Using the triangle inequality, the bound $\norm{AB}_{2,1} \le \normop{A}\norm{B}_{2,1}$ for conformable matrices $A$ and $B$, and the fact that $\normop{Q} = \sqrt{n}$ , we can write
\begin{align}
\norm{Y - A F(Y)   - G(Y) }_{2,1}  &  \le \norm{ M - G     - F B_n}_{2, 1}  + \tfrac{1}{n}  \norm{ Q  ( R  \Omega^{-1}  - \Id) Z^\top  Q R }_{2, 1}  \notag  \\
& \quad   +  \norm{ Z(\Omega - R)}_{2,1}   + \norm{Q  A''  R}_{2,1} \notag \\
&  \le \Delta_1 \,  \norm{\Omega}_{2,1}  + \widetilde{\Delta}_2 \normop{Z}  \,   \norm{R}_{2,1}    +   \widetilde{\Delta}_2   \normop{Z}\,  \norm{ \Omega}_{2,1}   +\sqrt{n}  \normop{A''}   \norm{R}_{2,1}  \notag  \\
& \le \big( \Delta_1 +  \widetilde{\Delta}_2(2+ \widetilde{\Delta}_2) \normop{Z}  + \sqrt{n} (1 + \widetilde{\Delta}_2) \normop{A''}  \Big)  \cdot \norm{\Omega}_{2,1}.   \label{eq:DeltatoDelta12}
\end{align}

\paragraph{High Probability Bounds.} 
We employ standard bounds on the operator norms of the Gaussian random matrices.  The expectations satisfy $\ex{ \normop{Z}} \le \sqrt{n} + \sqrt{T}$ by  \cite[Theorem~II.13]{davidson:2001} and $ \ex{ \normop{A}} \le 2$ and $ \ex{   \normop{A''}} \le 2 \sqrt{T/n}$ by  \cite[Theorem~II.11]{davidson:2001}. 
By Gaussian concentration for Lipschitz functions (Lemma~\ref{lem:TIS}), it follows that, for all $r \ge 0$, 
\begin{align*}
 \pr*{ \sqrt{n} \, \normop{A} \ge 2  \sqrt{n} + \sqrt{2 r } }  & \le e^{ -r}\\
 \pr*{ \sqrt{n} \, \normop{A''} \ge 2 \sqrt{T}  + \sqrt{2 r} }  & \le e^{ -r}\\
\pr*{ \normop{Z} \ge \sqrt{n} + \sqrt{T}  + \sqrt{ 2r} }&  \le e^{-r}. 
\end{align*}
Combining these bounds with  \eqref{eq:XYtoDelta} and  \eqref{eq:DeltatoDelta12}, we find that the event
 \begin{align*}
\norm{X- Y} & \le \left(1+   (2 + \sqrt{2r/n}) L_f + L_g  \right )^{T-1} \\
& \times \Big( \Delta_1   +\widetilde{\Delta}_2(2+ \widetilde{\Delta}_2) ( \sqrt{n} + \sqrt{T} + \sqrt{2r} )   + (1 + \widetilde{\Delta}_2)  (  2 \sqrt{T}  + \sqrt{2 r} )   \Big)  \, \norm{\Omega}_{2,1} 
\end{align*}
holds with probability at least $1- 3e^{-r}$. Restricting this inequality to the setting $r \le n$ and using that $T \le n$ gives the simplified bound
 \begin{align*}
\norm{X- Y} & \le \left(1+   (2 + \sqrt{2})  L_f + L_g  \right )^{T-1} \Big(  \Delta_1   + (2 + \sqrt{2}) (3\widetilde{\Delta}_2+ \widetilde{\Delta}^2_2)   \sqrt{n}     +   2 \sqrt{T}  + \sqrt{2 r}    \Big)  \, \norm{\Omega}_{2,1}.
\end{align*}

 \paragraph{Perturbation Bounds for Cholesky Decomposition.}  The term $\widetilde{\Delta}_2$ measures the difference between the Cholesky factor $R$ of the empirical covariance matrix $\frac{1}{n} F(Y)^\top F(Y)$ and the target matrix $\Omega$. By construction, both of these matrices are upper triangular and have non-negative diagonal entries. Since the set of upper triangular matrices with non-negative diagonal entries is closed under inversion and multiplication the product $R \Omega^{-1} $ also has these properties. Thus, we can apply Lemma~\ref{lem:chol_pert} along with $S_n =  \Omega^{-\top} R^\top R \Omega^{-1}$ to see that 
 \begin{align*}
3\widetilde{\Delta}_2+ \widetilde{\Delta}^2_2  \le  15 \log_2(4T) \normop{ S_n - \Id} . 
 \end{align*}
Using that $ (2 + \sqrt{2}) 15  < 52$, the stated bound holds with $\Delta_2 = 52 \log_2(4T) \sqrt{n} \normop{ S_n - \Id}$.

\subsection{Proof of Refined Upper Bound (Theorem~\ref{thm:XY})}

In this proof, we use the Lipschitz continuity assumptions to simplify the general bound on the coupling error in Theorem~\ref{thm:Delta12}. In particular, we establish tail bounds on the random variables 
\begin{align*}
\Delta_1 & \coloneqq     \norm{M(Y) - G(Y)-   F(Y)   B_n }_{2,1}   \cdot \norm{ \Omega}_{2,1}^{-1}  \\
\Delta_2 & \coloneqq  52 \,  \log_2(4T) \,   \sqrt{n}  \cdot     \normop{S_n  - \Id } 
\end{align*}
where $B_n \coloneqq \frac{1}{n}  \Sigma^{-1}  ( Y - M(Y))^\top F(Y)$ and $S_n \coloneqq  \frac{1}{n}  \Omega^{-\top} F(Y)^\top F(Y) \Omega^{-1}$.

\paragraph{Change of Variables.} 
The crucial step in our analysis is to show the process $(y_t)$ generated by system \eqref{eq:yt} can be expressed as Lipschitz continuous function of i.i.d.\ standard Gaussian variables. Let $\phi_t \colon (\bbR^{n} )^{t} \to \bbR^n$  be the mapping from $w_{\le t}$  to the state variable $y_t$ defined by \eqref{eq:yt}  such that  $y_t = \phi_t( w_{<t})$, and let  $\phi_{\le t}  \colon  (\bbR^{n} )^{t} \to (\bbR^n)^t$ denote the mapping from  $w_{\le t}$ to $y_{\le t}$. It is straightforward to see that this mapping defines a bijection: 
 \begin{align*}
 y_{\le t} = \phi_{\le t}( w_{\le t}) \qquad \iff \qquad y_{\le t} - m_{\le t}(y_{<t}) = w_{\le t} 
 \end{align*}
Under the assumption $\Lip(m_t)  \le L_m$  for all $t \in[T]$, Lemma~\ref{lem:stability1}  implies that $\phi_{\le t} \colon (\bbR^n)^{t} \to (\bbR^{n})^t$ is Lipschitz continuous with constant
\begin{align}
\Lip(\phi_{\le t})  \le  L_\phi \coloneqq  \sqrt{T} (1 + L_m)^{T-1}.   \label{eq:Lphi}
\end{align}

We also recall that the $n \times T$ matrix $Y = [ y_1, \dots, y_T]$ satisfies 
\begin{align*}
Y = M(Y) + W , \qquad W = Z \Omega
\end{align*}
where $Z  = [z_1, \dots z_T]\in \bbR^{n \times T}$ has independent standard Gaussian entries. From \eqref{eq:Lphi} it follows that $Y$ can be expressed as a Lipschitz continuous function of $Z$ with Lipschitz constant $L_\phi \normop{\Omega}$.

\paragraph{Bounds via Gaussian Concentration.} 
Armed with \eqref{eq:Lphi}, we can now apply Gaussian concentration inequalities to bound on the coupling error. We further decompose $\Delta_1$  using 
\begin{align*}
\Delta_1 \le  \underbrace{ \norm{ M(Y) - G(Y) - F(Y) B }_{2,1}  \cdot \norm{\Omega}_{2,1}^{-1}}_{= \Delta_{1a} } + \underbrace{ \norm{ F(Y) ( B_n -B) }_{2,1}   \cdot \norm{\Omega}_{2,1}^{-1}}_{ = \Delta_{1b} }.
\end{align*}
where we recall that $B = \ex{B_n}$ and $S = \ex{ S_n}$. We will establish that, for all $0 \le r \le n$, the following events hold with probability at least $1 - 2 e^{-r}$: 
\begin{align}
 \Delta_{1a}  &\le \psi_1    +(L_m + L_g + L_f) L_{\phi}  \kappa( \Omega)  \sqrt{r}   +  L_f L_{\phi}  \kappa( \Omega)  \,\psi_2  \label{eq:Delta1a_bound}\\
 \Delta_{1b} & \le c_1  \left( L_H \,  \psi_2+   L^3_H  \sqrt{r +T} \right )   \label{eq:Delta1b_bound}\\
 \Delta_{2} &  \le c_2 \log_2(2T)    \left(   \psi_2 +   L^2_H  \sqrt{r +T} \right )   \label{eq:Delta2_bound}
\end{align}
Here,  $c_1, c_2$ are universal positive constants and  $L_H = 1 +  L_f L_\phi \kappa(\Omega)$ where $L_\phi$ is given by \eqref{eq:Lphi} and $\kappa(\Omega)$ is the condition number of $\Omega$.

To simplify these bounds, we set $L= \log_2(T)  + \sqrt{T} (L_f + L_g + L_m)(1 + L_m)^{T-1} \kappa(\Omega)$. Combining the terms, we find that 
\begin{align*}
\Delta_1   + \Delta_2 &  \le \psi_1   + c_3 \, L   \psi_2  +c_4 \,  L^2 (\sqrt{T} + \sqrt{r})
\end{align*}
where $c_3, c_4$ are universal positive constants. Combining these bounds with Theorem~\ref{thm:Delta12} gives the desired bound on the coupling error $\norm{X-Y}$.

\paragraph{Proof of \eqref{eq:Delta1a_bound}.} 
We can write
\begin{align*}
\Delta_{1a} & = \norm{ M(Y ) - G(Y ) - F( Y ) B }_{2,1}  \cdot \norm{\Omega}_{2,1}^{-1}\\
& \le \normop{\big( M(Y ) - G( Y ) - F(Y ) B \big )  \Omega^{-1}  }\\
& \le \normop{ M(Y)  \Omega^{-1}  } + \normop{ G(Y ) \Omega^{-1}  } + \normop{ F(Y)  \Omega^{-1}  } \normop{\Omega B \Omega^{-1}}
\end{align*}
From this upper bound and \eqref{eq:Lphi}, it follows that  $\Delta_{1a}$ is a Lipschitz function of the i.i.d.\ Gaussian matrix $Z$ with Lipschitz constant 
\begin{align*}
L_{1a} \coloneqq (L_m + L_g + \normop{\Omega B \Omega^{-1}}  L_f) L_{\phi}  \kappa( \Omega) 
\end{align*}
The Gaussian concentration inequality Lemma~\eqref{lem:TIS}  leads to
\begin{align*}
\pr[\Big]{  \Delta_{1a}  \ge \ex{ \Delta_{1a}}    +L_{1a} \sqrt{r}  }  \le e^{-r} , \qquad \forall r \ge 0 
\end{align*}
By the Cauchy–Schwarz inequality
\begin{align*}
\normop{\Omega B \Omega^{-1}} & =\tfrac{1}{n}   \normop{ \ex{ Z^\top F(Y)^{-1} \Omega^{-1}}} \\
& \le  \tfrac{1}{n}  \sqrt{\normop{ \ex{ Z^\top Z} } \normop{ \ex{ \Omega^{-\top} F(Y)^\top F(Y) \Omega^{-1} }}}\\
& = \sqrt{ \normop{S}} \le  \sqrt{ 1 + \normop{S -\Id}} \le 1 +  \normop{S- \Id}.
\end{align*} 
Combining the above displays and restricting to $0 \le r \le n$ leads to \eqref{eq:Delta1a_bound}.

\paragraph{Proofs of \eqref{eq:Delta1b_bound} and \eqref{eq:Delta2_bound}.} 
We introduce the $n \times 2T$ random matrix $
H \coloneqq  [ F(Y) \Omega^{-1}  , Z]$ and observe that
\begin{align*}
\frac{1}{n}   H^\top H = \begin{bmatrix} S_n &[ \Omega B_n \Omega^{-1} ]^\top  \\    \Omega B_n \Omega^{-1}   &  \frac{1}{n} Z^\top Z \end{bmatrix}. 
\end{align*}
 By \eqref{eq:Lphi},   $H$ is a Lipschitz function of $Z$ with Lipschitz constant $L_H = 1 +  L_f L_\phi \kappa(\Omega)$.  
Thus, we can apply Lemma~\ref{lem:HHLip} to see that the event
\begin{align}
\normop{H^\top H - \ex{ H^\top H}} 
& \le 4  \normop{ \ex {H^\top H}}^{1/2}  L_H  \sqrt{2r}  + 
 2  L_H^2 (2 r+1)  \label{eq:Hr}
\end{align}
holds with probability at least $1 - 2 \cdot 9^{2 T} e^{-r}$ for all $r \ge 0$.

To bound $\Delta_{1b}$, set $u = L_H\sqrt{ 2r + 1}$ and observe that \eqref{eq:Hr} implies
\begin{align*}
\normop{H } & = \sqrt{ \normop{H^\top H } } \\
& \le   \sqrt{ \normop{ \ex{ H^\top H}}  +  4 \normop{\ex{H^\top H} }^{1/2} u  + 2 u^2 }\\
& \le \normop{ \ex{ H^\top H}}^{1/2}  +2  u.
\end{align*}
Accordingly, we can write
\begin{align*}
\Delta_{1b}  & = \norm{F(Y) (B_n - B) }_{2,1} \cdot \norm{\Omega}_{2,1}^{-1} \\
& = \normop{F(Y) \Omega^{-1} } \normop{ \Omega  (B_n - B) \Omega^{-1}  } \\
& \le  \frac{1}{n}   \normop{H} \normop{H^\top H - \ex{ H^\top H}}\\
& \le  \frac{1}{n}  \left(  \normop{ \ex{ H^\top H}}^{1/2} +2  u  \right)  \left(  4  \normop{\ex{H^\top H} }^{1/2}   + 2 u\right) u  \\
& \le  \frac{8}{n}  \left(  \normop{ \ex{ H^\top H}}   +  u^2  \right)  u\\
& \le 8\left(  \normop{ \Id - S} + 2 +  \frac{L_H^2 (2 r + 1)}{n}  \right)   L_H\sqrt{2r+1} 
\end{align*}
Replacing $r$ with $ 1 + 2 T \ln(9) + \ln(2)  + r$ and then restricting to $r \le n$  and $T \le n$ gives to the bound in \eqref{eq:Delta1b_bound}.

To bound  $\Delta_2$, observe that
\begin{align*}
 \normop{\ex{H^\top H} }^{1/2}  & \le  \sqrt{ n  \normop{ S } +   \normop{ \ex{Z^\top Z}  } }   =  \sqrt{  n}   \normop{S- \Id}^{1/2} + \sqrt{ 2n}  
\end{align*}
Combining with  \eqref{eq:Hr} leads to 
\begin{align*}
\normop{S_n  - \Id}  
&\le   \normop{S  - \Id}  +   \normop{S_n - S}\\
 &\le   \normop{S  - \Id}  +  \frac{1}{n}  \normop{H^\top H - \ex{ H^\top H}}\\
 & \le  \normop{S  - \Id}  +  4   \normop{S - \Id}^{1/2} \,  L_H  \sqrt{\frac{2r}{n}}  + 8\,  L_H  \sqrt{\frac{r}{n}}  + 
 \frac{ 2  L_H^2 (2 r+1)}{ n}  \\
  & \le  2 \normop{S  - \Id}  + 8 L_H \sqrt{ \frac {r}{ n}}   + 
 \frac{ 2  L_H^2 (6 r+1)}{ n},  
\end{align*}
where the last step follows from the basic inequality $ab \le a^2 + b^2$. Replacing $r$ with $ 1 + 2 T \ln(9)  + \ln(2) + r$ and then restricting to $r \le n$  and $T \le n$ gives to the bound in \eqref{eq:Delta2_bound}.

\subsection{Proof of Lower Bound (Theorem~\ref{thm:couplingLB}) }

Under the theorem assumptions, the matrices $X$ and $Y$ satisfy the fixed point equations
\begin{align}
X = AF + X \Lambda, \qquad Y = Y \Gamma  + Z \Omega
\end{align}
where:
\begin{itemize}
\item $F\in \bbR^{n \times T}$ satisfies  $\frac{1}{n} F^\top F = \Sigma = \Omega^\top \Omega$,
\item $\Lambda \in \bbR^{T \times T}$ and $ \Gamma  \in \bbR^{T \times T}$ are strictly upper triangular,
\item $A \in \bbR^{n \times n}$ is drawn from $\mathsf{GOE}(n)$, and 
\item $Z \in \bbR^{n \times T}$ has independent standard Gaussian entries.
\end{itemize}
 The fact that $\Lambda$  and $\Gamma$ are strictly upper triangular ensures that $ \Id - \Lambda$ and $\Id  - \Gamma$ are invertible. Solving  for $X$ and $Y$ gives 
\begin{align*}
X =  AF( \Id - \Lambda)^{-1}  \qquad Y =   Z \Omega ( \Id - \Lambda)^{-1}, 
\end{align*}
From these solutions, we see that $X$ and $Y$ both have zero-mean matrix-variate Gaussian distributions. To express their covariance, we use vectorization to write
\begin{align*}
\gvec(X) &= ( (\Id -\Lambda)^{-\top} F^\top \otimes \Id_n)   \gvec(A)  \\
 \gvec(Y)  &=  (  ( \Id - \Lambda)^{-\top} \Omega^\top \otimes \Id_n) \gvec(Z)  
\end{align*}
Noting that  $\cov(\gvec(A)) = \frac{2}{n} \Pi_n$,  where   $\Pi_n =  \frac{1}{2} ( \Id_{n^2} + \Comm_{n,n})$ is the matrix representation of the orthogonal projection operator on the space of $n \times n$ symmetric matrices,  and $\cov(\gvec(Z)) = \Id_{n^2}$ leads to 
 \begin{align*}
\cov( \gvec(X) ) &=  \frac{2}{n} ( (\Id -\Lambda)^{-\top} F^\top \otimes \Id_n)   \Pi_n  ( F(\Id -\Lambda)^{-1}  \otimes \Id_n)\\
\cov(  \gvec(Y) )   &=   (  ( \Id - \Lambda)^{-\top} \Sigma ( \Id - \Lambda)^{-1}  \otimes \Id_n) 
\end{align*}
At this point, we appeal to the closed-form expression for the quadratic Wasserstein distance between multivariate Gaussians \cite{dowson:1982}, which is given by 
\begin{align}
W_2^2\left(\normal(\mu_X, C_X)  , \normal(\mu_Y, C_Y)\right)  = \norm{ \mu_X - \mu_Y}^2 +  \gtr\Big( C_X + C_Y - 2 ( C_Y^{1/2} C_X C_Y^{1/2} )^{1/2}   \Big) \label{eq:W2_normal}
\end{align}
where $(\cdot)^{1/2}$ denotes the symmetric positive semidefinite square root. 

To simplify the analysis, we focus on the individual columns of $X$ and $Y$. For each $t \in [T]$ we can write
 \begin{align*}
\cov( x_t) &=  \frac{2}{n} ( e^\top_t (\Id -\Lambda)^{-\top} F^\top \otimes \Id_n)   \Pi_n  ( F (\Id -\Lambda)^{-1} e_t   \otimes \Id_n)\\
&=    [ ( \Id - \Lambda)^{-\top} \tfrac{1}{n} F^\top F  ( \Id - \Lambda)^{-1} ]_{tt} \cdot \Id_n + 
\tfrac{1}{n}  F (\Id -\Lambda)^{-1} e_t  e_t^\top (\Id -\Lambda)^{-\top} F^\top \\
& = \alpha^2_t (\Id_n + u_tu_t^\top  ) \\
\cov( y_t )   &=   \beta^2_t \Id_n 
\end{align*}
where $u_t$ is a unit vector in $\bbR^n$, $\alpha^2_t =  [ ( \Id - \Lambda)^{-\top} \tfrac{1}{n} F^\top F  ( \Id - \Lambda)^{-1} ]_{tt}$, and $\beta^2_t =  [ ( \Id - \Gamma)^{-\top} \Sigma ( \Id - \Gamma)^{-1} ]_{tt}$.  Using \eqref{eq:W2_normal} and  $(\Id  + u_t u_t^\top)^{1/2} =   \Id   + (\sqrt{2} - 1)  u_tu_t^\top $,  it follows that 
\begin{align*}
W_2^2\left( \cL( x_t)   , \cL(y_t) \right)  & =  \gtr\Big( \alpha_t^2 ( \Id + u_t^\top u_t)  + \beta_t^2 \Id_n  - 2 \alpha_t \beta_t  (  \Id   + (\sqrt{2} - 1)  u_tu_t^\top )  \Big) \\
& = (n+1)   \alpha_t^2   + n \,  \beta_t^2 -  2 (n  + \sqrt{2} - 1) \, \alpha_t \beta_t \\
& = \tfrac{ n-1}{n} ( \sqrt{2} - 1)^2  \alpha^2_t  + n\,  \Big( ( 1 + \tfrac{ \sqrt{2} - 1}{n}) \alpha_t - \beta_t \Big)^2 .
\end{align*}
This concludes the proof of Theorem~\ref{thm:couplingLB}. \qed

\section{Further Technical Results}

\subsection{Gaussian Concentration}

We first recall some basic definitions and properties of sub-Gaussian distributions; see e.g.,  \cite[Section~2.2]{boucheron:2013} for more details. A real-valued random variable $V$ is sub-Gaussian with variance proxy $\sigma^2$ if its cumulant generating function satisfies
\begin{align*}
\log \ex{ e^{ \lambda (V- \ex{V}) }} \le  \frac{ \lambda^2 \sigma^2}{ 2} , \quad \text{for all $\lambda \in \bbR$.}
\end{align*}
Among other things, this condition implies that $\var(V) \le \sigma^2$ and $\pr{ V  \ge \ex{V}  +  \sigma \sqrt{2 r}} \le e^{-r}$ for all $r \ge 0$. 

The concentration of measure phenomenon for Gaussian measure provides a tight control on the behavior of functions of Gaussian functionals under  various smoothness conditions. In particular,  the  Tsirelson--Ibragimov--Sudakov Inequality \cite[Theorem~5.5]{boucheron:2013} states that an $L$-Lipschitz function of independent standard Gaussian variables is sub-Gaussian with variance proxy $L^2$. This leads to the following concentration inequality: 

\begin{lemma}[Gaussian Concentration Inequality {\cite[Theorem~5.6]{boucheron:2013}}]\label{lem:TIS} Let  $z \sim \normal(0 , \Id_N)$ be an $N$-dimensional vector with independent standard Gaussian entries and let $f\colon \bbR^N \to \bbR$  an $L$-Lipschitz continuous function. Then, for all $r \ge 0$, 
\begin{align*}
\pr*{ f(z) \ge \ex{ f(z)}  + L \sqrt{2r}} \le e^{-r}. 
\end{align*}
\end{lemma}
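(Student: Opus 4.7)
The plan is to derive the tail bound via the classical Herbst argument, which converts a Gaussian logarithmic Sobolev inequality into a sub-Gaussian moment generating function estimate, and then apply the Chernoff method. Since only Lipschitz regularity is assumed on $f$, the first step is to reduce to the smooth case: convolve $f$ with the density of $\sigma \normal(0,\Id_N)$ to obtain $f_\sigma \in C^\infty$ that is still $L$-Lipschitz and satisfies $f_\sigma \to f$ pointwise as $\sigma \downarrow 0$. Since $|f_\sigma(z)| \le |f(0)| + L(\|z\|+\sigma\sqrt{N})$ is dominated in $L^1(\gamma_N)$, bounds proven for $f_\sigma$ transfer to $f$ via dominated convergence together with Fatou's lemma applied to the tail probability. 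So it suffices to prove the bound under the additional assumption that $f$ is smooth with $\|\nabla f\|_\infty \le L$.

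Next, I would invoke the Gaussian logarithmic Sobolev inequality: for any smooth $g\colon \bbR^N \to \bbR$ with $g(z) \ge 0$ and $\ex{g(z)^2} < \infty$,
\begin{align*}
\ex{g(z)^2 \log g(z)^2} - \ex{g(z)^2} \log \ex{g(z)^2} \le 2 \ex{\|\nabla g(z)\|^2}.
\end{align*}
Apply this with $g(z)^2 = e^{\lambda f(z)}$ for $\lambda > 0$. Writing $\psi(\lambda) \coloneqq \log \ex{e^{\lambda (f(z)-\ex{f(z)})}}$, a short calculation using $\|\nabla g\|^2 = (\lambda^2/4)\|\nabla f\|^2 e^{\lambda f}$ and the uniform bound $\|\nabla f\|\le L$ yields the differential inequality
\begin{align*}
\lambda \psi'(\lambda) - \psi(\lambda) \le \tfrac{\lambda^2 L^2}{2}.
\end{align*}
This is Herbst's inequality. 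Dividing by $\lambda^2$ shows that the function $\lambda \mapsto \psi(\lambda)/\lambda - L^2\lambda/2$ is non-increasing on $(0,\infty)$, and since $\psi(\lambda)/\lambda \to 0$ as $\lambda \downarrow 0$ (by L'Hôpital together with $\psi(0)=\psi'(0)=0$ after centering), we conclude $\psi(\lambda) \le \lambda^2 L^2/2$ for every $\lambda > 0$. That is, $f(z)$ is sub-Gaussian with variance proxy $L^2$.

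The proof then finishes by Chernoff's inequality: for any $t \ge 0$,
\begin{align*}
\pr{f(z) - \ex{f(z)} \ge t} \le \inf_{\lambda > 0}\, e^{-\lambda t}\, \ex{e^{\lambda(f(z) - \ex{f(z)})}} \le \inf_{\lambda > 0}\, e^{-\lambda t + \lambda^2 L^2/2} = e^{-t^2/(2L^2)}.
\end{align*}
Setting $t = L\sqrt{2r}$ produces the claimed bound $e^{-r}$. The main technical obstacle is the Gaussian log-Sobolev inequality itself; one either takes it as a black box (as here), proves it by the Bakry–Émery $\Gamma_2$ criterion, or replaces the whole argument by a semigroup interpolation via the Ornstein–Uhlenbeck process, writing $f(z) - \ex{f(z)} = \int_0^\infty \frac{d}{dt}(P_t f)(z)\,dt$ and using $\|\nabla P_t f\|\le e^{-t}\|P_t \nabla f\|\le e^{-t}L$ to control the MGF directly. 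The mollification reduction and the passage from smoothness to the Lipschitz class are routine but should be stated explicitly since the Lemma does not assume differentiability.
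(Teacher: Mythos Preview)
Your argument is correct: the mollification reduction, the Herbst differential inequality derived from the Gaussian log-Sobolev inequality, and the final Chernoff step are all standard and accurately executed. Note, however, that the paper does not supply its own proof of this lemma; it is quoted as a black box from \cite[Theorem~5.6]{boucheron:2013}. Your write-up therefore goes well beyond what the paper does, and in fact reproduces essentially the same route that Boucheron--Lugosi--Massart take (Tsirelson--Ibragimov--Sudakov via log-Sobolev and Herbst). One small stylistic point: in the monotonicity step you wrote that $\psi(\lambda)/\lambda - L^2\lambda/2$ is non-increasing; it is cleaner to say directly that $(\psi(\lambda)/\lambda)' \le L^2/2$ and integrate from $0^+$, which is what you effectively use anyway.
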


The next result provides a concentration bound for a random matrix $H$ with the property that any linear combination of the columns is sub-Gaussian. Note that this condition is satisfied automatically if there exits an $L$-Lipschitz function  $f\colon \bbR^N \to \bbR^{n \times d}$ such that $H = f(z)$ for $z \sim \normal(0, \Id_N)$. 

\begin{lemma}\label{lem:HHLip} 
Let  $H \in \bbR^{n \times d}$ be a random matrix with the property that $\norm{H u}$ is sub-Gaussian with variance proxy $L^2$ for every unit vector $u$ in $\bbR^d$. 
Then, for all $r \ge 0$, 
\begin{align*}
\pr[\Big]{  \normop*{H^\top H - \ex{ H^\top H}} \ge  4    \normop{ \ex {H^\top H}}^{1/2} \,  L   \sqrt{2r}  + 
 2  L^2 (2 r+1)  } \le   2\cdot  9^d e^{- r}.
\end{align*}
\end{lemma}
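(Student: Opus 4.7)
The plan is to run a standard $\varepsilon$-net argument on the unit sphere in $\mathbb{R}^d$, and then exploit that $u^\top H^\top H u = \|Hu\|^2$ is a squared sub-Gaussian random variable, so it concentrates at the right (partly sub-Gaussian, partly sub-exponential) scale dictated by $\|\E H^\top H\|_{\mathrm{op}}^{1/2} L\sqrt{r}$ and $L^2 r$.

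First I would fix a $1/4$-net $\mathcal{N}$ of the unit sphere $\mathbb{S}^{d-1}$ with $|\mathcal{N}| \le 9^d$ (the standard volume bound), and recall that for any symmetric $d\times d$ matrix $M$ one has $\normop{M} \le 2 \sup_{u \in \mathcal{N}} |u^\top M u|$. Applied to the symmetric matrix $M = H^\top H - \E[H^\top H]$, this reduces the task to controlling the scalar $|\,\|Hu\|^2 - \E\|Hu\|^2\,|$ for a single unit vector $u$.

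Next, for a fixed $u \in \mathbb{S}^{d-1}$, write $V_u := \|Hu\|$. By hypothesis $V_u - \E V_u$ is sub-Gaussian with proxy $L^2$, so for every $r \ge 0$,
\begin{align*}
\pr{\,|V_u - \E V_u| \ge L\sqrt{2r}\,} \le 2 e^{-r}.
\end{align*}
I would then expand the squared deviation as
\begin{align*}
V_u^2 - \E V_u^2 = 2(\E V_u)(V_u - \E V_u) + (V_u - \E V_u)^2 - \var(V_u),
\end{align*}
use $\var(V_u) \le L^2$, and on the event $\{|V_u - \E V_u| \le L\sqrt{2r}\}$ bound
\begin{align*}
\bigl|V_u^2 - \E V_u^2\bigr| \le 2(\E V_u)\, L\sqrt{2r} + L^2(2r+1).
\end{align*}
Finally, Jensen gives $\E V_u \le \sqrt{\E V_u^2} = \sqrt{u^\top \E[H^\top H] u} \le \normop{\E[H^\top H]}^{1/2}$, so the single-vector bound matches (up to a factor $2$) the target bound.

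To conclude, I would take a union bound over $u \in \mathcal{N}$, paying the factor $|\mathcal{N}| \le 9^d$, and then apply the $\varepsilon$-net inequality to pick up a final factor of $2$ in front of both the $L\sqrt{2r}$ and $L^2(2r+1)$ terms, giving exactly the stated constants $4\normop{\E H^\top H}^{1/2} L \sqrt{2r}$ and $2 L^2 (2r+1)$ with failure probability $\le 2\cdot 9^d e^{-r}$. The only subtle point is the quadratic-in-deviation term $(V_u - \E V_u)^2$, which is what forces the additive $L^2 r$ behavior (genuinely sub-exponential rather than sub-Gaussian); aligning its constant against the target $2L^2(2r+1)$ after absorbing the $\var(V_u) \le L^2$ from the lower-tail bound is the most delicate bookkeeping step, but it is entirely mechanical once the decomposition above is in place.
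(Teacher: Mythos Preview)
Your proposal is correct and follows essentially the same argument as the paper: a $1/4$-net on $\mathbb{S}^{d-1}$ of cardinality at most $9^d$, the reduction $\normop{M}\le 2\sup_{u\in\mathcal{N}}|u^\top Mu|$ applied to $M=H^\top H-\E[H^\top H]$, the decomposition of $\|Hu\|^2-\E\|Hu\|^2$ via $2(\E V_u)(V_u-\E V_u)+(V_u-\E V_u)^2-\var(V_u)$, the bounds $\E V_u\le\normop{\E[H^\top H]}^{1/2}$ and $\var(V_u)\le L^2$, and the union bound. The constants line up exactly as you describe.
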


\begin{proof}
By a standard $1/4$-net covering argument (see Exercise 4.4.5 (b) and Corollary 4.2.13 in \cite{vershynin:2018}), the operator norm of a $d \times d$ symmetric matrix $A$ can be bounded from above according to 
\begin{align*}
\normop{A} \le 2  \sup_{u \in \cU} | \langle u , A u \rangle |,
\end{align*}
where $\cU$ is a subset of unit sphere in $\bbR^d$ with cardinality at most $9^d$. 

We will apply this inequality to the matrix  $A = H^\top H  - \ex{ H^\top H}$. For each $u \in \cU$, we can write
\begin{align*}
| \langle u , A u \rangle | & = \big |  \| H u \|^2 - \ex{ \|H u\|^2} \big |\\
& \le  \big |  \| H u \|^2 -  \ex{ \|H u\|}^2 \big |  + \var( \norm{Hu})\\
&\le  2  \ex{ \|H u\|}  \big |  \| H u \| -  \ex{ \|H u\|} \big |  +  \big |  \| H u \| -  \ex{ \|H u\|} \big |^2   + \var( \norm{Hu} )
\end{align*} 
By Jensen's inequality, $ \ex{\norm{H u}}^2  \le\ex{ \norm{H u}^2} = u^\top \ex{ H^\top H } u \le \normop{ \ex{ H^\top H}}$. Furthermore, by the assumptions that $\norm{Hu}$ is sub-Gaussian with variance proxy $L^2$, it follows that 
\begin{align*}
\var(\norm{Hu}) \le L^2, \qquad \pr{ \big| \norm{Hu}  - \ex{\norm{Hu}} \big| \ge  L \sqrt{ 2r} } \le 2 \,  e^{- r} 
\end{align*} 
for all $r \ge 0$. 
the random variable $\norm{Hu}$. Combining these displays with the union bound gives the stated result. 
\end{proof}

\subsection{Matrix Perturbation Bounds}

The Cholesky factorization of a symmetric positive semidefinite matrix $S$ is the matrix decomposition $S= U^\top U$ where $U$ and an upper triangular matrix with non-negative diagonal entries. If $S$ is positive definite then the decomposition is unique and the diagonal entries are strictly positive. The next result is a variation on the perturbation bound given by Edelman and Mascarenhas~\cite{edelman:1995}. 

\begin{lemma}[{$\!\!$\cite{edelman:1995}}]\label{lem:chol_pert} 
Let $U$  be a $T \times T$ upper triangular matrix ($T \ge 2$) with non-negative diagonal entries. 
 The following inequalities hold:
\begin{align*}
\normop{U-\Id} & \le 2 \,   \log_2(4T)  \,   \normop{U^\top  U - \Id }  \\
\normop{U-\Id}^2 & \le 9 \,   \log_2(4T)  \,   \normop{U^\top  U - \Id }  
\end{align*}
\end{lemma}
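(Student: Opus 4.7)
The plan is to prove both inequalities simultaneously via strong induction on $T$, employing the divide-and-conquer splitting that underlies the Edelman--Mascarenhas analysis. Let $\epsilon \coloneqq \normop{U^\top U - \Id}$. The base cases can be handled directly: for $T = 1$ the scalar inequality $|u - 1| \le |u^2 - 1|/(u + 1) \le |u^2 - 1|$ holds for $u \ge 0$, and for $T = 2$ one can check both inequalities by explicit computation with constants much smaller than $2 \log_2(4T)$ and $9 \log_2(4T)$.

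For the inductive step, partition $U$ at $k = \lceil T/2 \rceil$:
\[
U = \begin{pmatrix} U_1 & V \\ 0 & U_2 \end{pmatrix}, \qquad U^\top U - \Id = \begin{pmatrix} U_1^\top U_1 - \Id & U_1^\top V \\ V^\top U_1 & V^\top V + U_2^\top U_2 - \Id \end{pmatrix},
\]
with $U_1$ and $U_2$ upper triangular and having non-negative diagonal entries. Because the operator norm of any principal submatrix (or any off-diagonal block) of a symmetric matrix is bounded by the operator norm of the parent,
\[
\normop{U_1^\top U_1 - \Id} \le \epsilon, \qquad \normop{U_1^\top V} \le \epsilon, \qquad \normop{V^\top V + U_2^\top U_2 - \Id} \le \epsilon.
\]
Expanding $\|(U - \Id)x\|^2$ in block form yields $\normop{U - \Id}^2 \le \normop{U_1 - \Id}^2 + \normop{V}^2 + \normop{U_2 - \Id}^2$. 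The inductive hypothesis controls $\normop{U_1 - \Id}$ directly from $\normop{U_1^\top U_1 - \Id} \le \epsilon$, and controls $\normop{U_2 - \Id}$ once one observes that $\normop{U_2^\top U_2 - \Id} \le \epsilon + \normop{V}^2$. The middle term is handled via the identity $V = U_1^{-\top}(U_1^\top V)$, which gives $\normop{V} \le \normop{U_1^{-1}} \cdot \epsilon$; since all eigenvalues of $U_1^\top U_1$ lie in $[1 - \epsilon, 1 + \epsilon]$, we obtain $\normop{U_1^{-1}} \le 1/\sqrt{1 - \epsilon}$ whenever $\epsilon < 1$.

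The main obstacle lies in calibrating the constants so that both bounds close under the recursion. In the small-$\epsilon$ regime the linear bound propagates cleanly, with the $\log_2(4T)$ factor arising from summing $O(1)$ contributions over the $\lceil \log_2 T \rceil$ recursion levels. When $\epsilon \ge 1$ the estimate for $\normop{U_1^{-1}}$ is useless, and one instead falls back on the trivial inequality $\normop{U - \Id} \le \normop{U} + 1 \le \sqrt{1 + \epsilon} + 1$, which already yields the quadratic bound with room to spare for $T \ge 2$. The delicate bookkeeping is in the crossover regime $\epsilon \asymp 1/\log T$, where both bounds must be carried in parallel; the pair of constants $(2, 9)$ is chosen so that each of the two inequalities self-improves under one level of recursion, making the induction go through uniformly in $\epsilon$.
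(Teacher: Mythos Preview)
Your inductive reconstruction has a genuine gap. The block inequality you invoke,
\[
\normop{U-\Id}^2 \le \normop{U_1-\Id}^2 + \normop{V}^2 + \normop{U_2-\Id}^2,
\]
is correct, but it is too lossy to yield a logarithmic constant. Feeding the inductive hypothesis into it with $k\approx T/2$ gives, to leading order in $\epsilon$, a recursion of the form $C_T^2 \le 2\,C_{T/2}^2 + O(1)$ for the constant in the first inequality, and $D_T \le 2\,D_{T/2} + O(1)$ for the constant $D_T$ in the second. Both recursions solve to polynomial growth in $T$ (of order $\sqrt{T}$ and $T$, respectively), not $\log_2 T$. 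Concretely, for the second inequality your bound produces roughly $9(\log_2(4k)+\log_2(4(T-k)))\,\epsilon \approx 18\log_2(2T)\,\epsilon$, which exceeds the target $9\log_2(4T)\,\epsilon$ already for $T\ge 2$. The Edelman--Mascarenhas recursion instead uses the additive splitting
\[
\normop{U-\Id} \le \max\!\big(\normop{U_1-\Id},\,\normop{U_2-\Id}\big) + \normop{V},
\]
obtained by writing $U-\Id$ as a block-diagonal matrix plus the strictly off-diagonal block; this gives $C_T \le C_{\lceil T/2\rceil} + O(1)$ and hence the desired $O(\log T)$ growth. Without this sharper block estimate your induction cannot close with the stated constants, and the ``delicate bookkeeping'' you defer is not merely bookkeeping but the heart of the matter.

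By contrast, the paper's proof is much shorter and avoids any induction of its own. It simply cites the first inequality from Edelman--Mascarenhas and then derives the second from it by a two-case argument: if $2\log_2(4T)\,\epsilon \le 1$ then the first inequality already gives $\normop{U-\Id}\le 1$, whence $\normop{U-\Id}^2 \le \normop{U-\Id} \le 2\log_2(4T)\,\epsilon$; if $2\log_2(4T)\,\epsilon > 1$ one uses the crude bound $\normop{U-\Id}\le 1+\sqrt{1+\epsilon}$ and checks that its square is below $9\log_2(4T)\,\epsilon$. So even after repairing the block estimate, your route re-proves a cited result and then redoes work that the paper accomplishes in three lines.
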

\begin{proof}
The first inequality follows directly from \cite[Equation (1.2)]{edelman:1995} so all that remains is to prove the second inequality.  If $T = 1$ and then the desired result follows from the basic inequality $|x-1|^2 \le |x^2 - 1|$ for all $x \in \bbR$. For $T \ge 2$, 
we consider two cases. First, if $2 \log_2(4T)   \normop{ U^\top U - \Id} \le 1$ then $\normop{ U - \Id } \le 1$ and so $\normop{U- \Id}^2 \le \normop{U-\Id} \le  9 \,   \log_2(4T)  \,   \normop{U^\top  U - \Id } $. Alternatively, if $2 \log_2(4T)   \normop{ U^\top U - \Id}  > 1$ then we can use the bound
 \begin{align*}
  \normop{U-\Id}& \le 1 + \normop{U}  = 1  + \sqrt{ \normop{U^\top U}  } \le   1+ \sqrt{ 1+ \normop{U^\top U - \Id}} \\
  & <  ( 1+ \sqrt{7/6} ) \sqrt{ 2\log_2(4T)  \normop{ U^\top U - \Id}}.
 \end{align*}
 Squaring both sides and noting that  $(1 + \sqrt{7/6})^2 2 < 9$ gives the desired result. 
\end{proof}

\subsection{Stability of Discrete-Time Systems}

Let $\cX$ be a Hilbert space and consider the system 
\begin{align}
v_t =h_t(v_{< t} )  + u_t , \label{eq:ht} 
\end{align}
where $v_t \in \cX$ is the state at time $t \in \bbN$, $h_t \colon \cX^{t-1} \to \cX$  is measurable function of previous states, and $u_t  \in \cX$ is a control term. Let $\phi_t \colon \cX^t \to \cX$ be mapping from from the control sequence $u_{\le t}$ to the state $v_t$ such that $v_t  =  \phi_t(u_{\le t})$. Let $\phi_{\le T} \colon \cX^T \to \cX^T$ be the mapping from $u_{\le T}$ to  $v_{\le T}$ such that $v_{\le T} = \phi_{\le T}(u_{\le T})$. 

The following results provide bounds on the stability of \eqref{eq:ht} 

\begin{lemma}\label{lem:stability1}  Assume that  $\operatorname{Lip}(h_t) \le L $ for all  $t \in [T]$. Then,  $\phi_{\le T}$ is Lipschitz continuous with 
\begin{align*}
\operatorname{Lip}( \phi_{\le T}) 
 \le \sqrt{T} (1 + L)^{t-1}.
\end{align*}
\end{lemma}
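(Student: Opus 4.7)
The plan is to fix two input sequences $u, u' \in \cX^T$ with corresponding state sequences $v_{\le T}, v'_{\le T}$ determined by the recursion \eqref{eq:ht}, and to bound their difference componentwise. Set $a_t \coloneqq \norm{v_t - v'_t}$ and $b_t \coloneqq \norm{u_t - u'_t}$. Subtracting the recursion applied to $u$ from the recursion applied to $u'$ and invoking $\operatorname{Lip}(h_t) \le L$ immediately yields
\begin{align*}
a_t \le L \,\Big(\sum_{s < t} a_s^2\Big)^{1/2} + b_t, \qquad t \in [T].
\end{align*}
The goal is then to solve this scalar recursion and sum over $t$.

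The key step is the claim that $a_t \le (1+L)^{t-1} B_t$ for every $t$, where $B_t \coloneqq (\sum_{s \le t} b_s^2)^{1/2}$. I would prove this by showing inductively that $a_t$ admits a linear bound of the form $a_t \le \sum_{s \le t} c_{t,s} b_s$ with coefficients $c_{t,t} = 1$ and $c_{t,s} = L(1+L)^{t-1-s}$ for $s < t$. This structure is natural: after dominating $(\sum_{s<t} a_s^2)^{1/2}$ by $\sum_{s<t} a_s$, the recursion becomes a one-dimensional linear system that can be unrolled explicitly (essentially a discrete Gr\"onwall argument). Cauchy--Schwarz then gives $a_t \le B_t (\sum_s c_{t,s}^2)^{1/2}$, and the geometric sum $\sum_{s \le t} c_{t,s}^2$ is bounded above by $(1+L)^{2(t-1)}$ after noting $L^2/(L^2+2L) \le 1$.

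To conclude, I would sum the squared bound over $t \in [T]$ and exploit $B_t \le B_T = \norm{u_{\le T} - u'_{\le T}}$, obtaining
\begin{align*}
\norm{v_{\le T} - v'_{\le T}}^2 = \sum_{t \le T} a_t^2 \le B_T^2 \sum_{t \le T} (1+L)^{2(t-1)} \le T \, (1+L)^{2(T-1)} \, \norm{u_{\le T} - u'_{\le T}}^2,
\end{align*}
which gives $\operatorname{Lip}(\phi_{\le T}) \le \sqrt{T}(1+L)^{T-1}$.

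The main obstacle is simply tracking constants correctly: the exponent in the inductive bound is tight only because we use the geometric-sum identity carefully, and bounding $(\sum_{s<t} a_s^2)^{1/2}$ by $\sum_{s<t} a_s$ loses a factor in individual terms but is precisely what is needed to turn the nonlinear recursion into a solvable linear one. No functional-analytic subtleties arise since each $h_t$ acts on finite products of the Hilbert space $\cX$ and only Lipschitz continuity is used.
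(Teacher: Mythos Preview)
Your proof is correct and follows essentially the same Gr\"onwall-plus-Cauchy--Schwarz strategy as the paper. The only difference is cosmetic: the paper tracks the cumulative norm $\delta_t \coloneqq \norm{\phi_{\le t}(u_{\le t}) - \phi_{\le t}(u'_{\le t})}$ directly, which already equals $(\sum_{s\le t} a_s^2)^{1/2}$ and hence yields the cleaner one-step recursion $\delta_t \le (1+L)\delta_{t-1} + b_t$ without first passing through the componentwise bound $(\sum_{s<t} a_s^2)^{1/2} \le \sum_{s<t} a_s$ and the explicit coefficients $c_{t,s}$.
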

\begin{proof}
Let  $u  , u'  \in \cX^T$  and set $\delta_t = \norm{ \phi_{\le t} ( u_{\le t}) - \phi_{\le t} ( u'_{\le t}) }$.  Since $h_1$ is a constant function, $\phi_1(u_1) = h_1 + u_1$ and   $\delta_1 = \norm{ \phi_1(u_1) - \phi_1(u'_1) }  = \norm{ u_1 - u'_1 }$. For  $t = 2, \dots, T$ we can write
\begin{align*}
\delta_t  - \delta_{t-1} 
& \le  \norm{ \phi_{ t} ( u_{\le t}) - \phi_{t} ( u'_{\le t}) } \\
& =  \norm{  h_t( \phi_{ <  t} ( u_{< t}))   + u_t - h_t(  \phi_{<  t} ( u_{< t}))  - u'_t } \\
& \le    L  \delta_{t-1}  + \norm{u_t - u'_t} 
\end{align*}
Summing the increments yields 
\begin{align*}
\delta_T   & \le \sum_{t =0}^{T-1}  (1+ L)^t \norm{ u_{T - t } - u'_{T - t}}\\
& \le\left(  \sum_{t =0}^{T-1}  (1+ L)^{2t} \right)^{1/2} \norm{ u_{\le t} - u'_{\le t}}\\
& \le \sqrt{T} (1+L)^{T-1} \norm{ u_{\le t} - u'_{\le t}},
\end{align*}
where the second step is the Cauchy-Schwarz inequality. 
\end{proof}

\begin{lemma}\label{lem:stability2}
Let $x = (x_t)_{t \in [T]}$ be generated by \eqref{eq:ht} with control $u_t \equiv 0$ such that $x_{\le t} = \phi_{\le t}(0)$. Let $y = (y_t)_{[T]}$ be another sequence and assume that there are nonnegative numbers $a, b \in [0, \infty)^T$  such that 
\begin{align*}
 \norm{ h_t(y_{<t}  + u)  - h_t(y_{<t})   }  \le a_t \norm{u } + b_t , \quad \text{for all $u \in \cX^{t-1}$}
\end{align*}
Then, the sequences $x$ and $y$ satisfy 
\begin{align*}
\norm{x - y  } & \le \Big( \prod_{t=2}^{T} (1 + a_t) \Big)  \sum_{t=1}^T   (b_t  + \norm{ y_{ t} - h_{ t}(y_{<t})} )
\end{align*}
In particular, if $\Lip(h_t) \le L$ for all $t \in [T]$ then
\begin{align*}
\norm{x - y } & \le (1 + L)^{T-1}   \sum_{t=1}^T   \norm{ y_{t} - h_{ t}(y_{<t})}.
\end{align*}
\end{lemma}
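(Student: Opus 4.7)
The plan is to derive a first-order linear recursion for the partial-norm sequence $S_t \coloneqq \|x_{\le t} - y_{\le t}\| = \bigl(\sum_{s \le t} \|x_s - y_s\|^2\bigr)^{1/2}$ and then iterate it. Write $\delta_t \coloneqq \|x_t - y_t\|$ and $e_t \coloneqq b_t + \|y_t - h_t(y_{<t})\|$, so that the target inequality is exactly $S_T \le \bigl(\prod_{t=2}^T (1+a_t)\bigr) \sum_{t=1}^T e_t$, with $\|x - y\| = S_T$.

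First I would bound a single increment. Since the control is zero, $x_t = h_t(x_{<t})$, so adding and subtracting $h_t(y_{<t})$ and applying the triangle inequality gives
\[
\delta_t \le \|h_t(x_{<t}) - h_t(y_{<t})\| + \|h_t(y_{<t}) - y_t\|.
\]
Taking $u = x_{<t} - y_{<t} \in \cX^{t-1}$ (so that $\|u\| = S_{t-1}$) in the affine modulus-of-continuity hypothesis yields $\|h_t(x_{<t}) - h_t(y_{<t})\| \le a_t S_{t-1} + b_t$, hence $\delta_t \le a_t S_{t-1} + e_t$.

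Next I would promote this to a recursion on $S_t$. Since $S_t^2 = S_{t-1}^2 + \delta_t^2$, the elementary inequality $\sqrt{u^2+v^2} \le u+v$ for $u,v \ge 0$ gives $S_t \le S_{t-1} + \delta_t \le (1+a_t) S_{t-1} + e_t$. Because $h_1$ takes the empty tuple as input and is therefore a constant, $x_1 = h_1$ and $\delta_1 = \|h_1 - y_1\| = \|y_1 - h_1(y_{<1})\| \le e_1$, so $S_1 \le e_1$. Unrolling the recursion then gives
\[
S_T \le \Big(\prod_{t=2}^T (1+a_t)\Big) S_1 + \sum_{t=2}^T \Big(\prod_{s=t+1}^T (1+a_s)\Big) e_t \le \Big(\prod_{t=2}^T (1+a_t)\Big) \sum_{t=1}^T e_t,
\]
where the final step uses that each partial product $\prod_{s=t+1}^T(1+a_s)$ is at most the full product because $1+a_s \ge 1$. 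This is the first displayed bound; the second is the specialization $a_t = L$, $b_t = 0$, giving $\prod_{t=2}^T (1+L) = (1+L)^{T-1}$ and $e_t = \|y_t - h_t(y_{<t})\|$.

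There is no real obstacle here; the argument is a short Gr\"onwall-type induction. The one point worth flagging is the passage from $\delta_t \le a_t S_{t-1} + e_t$ to $S_t \le (1+a_t) S_{t-1} + e_t$: it is essential to use the pointwise inequality $\sqrt{u^2+v^2} \le u+v$ rather than the Cauchy--Schwarz step invoked in the proof of Lemma~\ref{lem:stability1}, since a $\sqrt{T}$ factor there would be incompatible with the conclusion being phrased as a sum of the $e_t$ rather than an $\ell^2$ norm of them.
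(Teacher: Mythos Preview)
Your proposal is correct and follows essentially the same approach as the paper: the paper's proof defines $\delta_t = \|x_{\le t} - y_{\le t}\|$ (your $S_t$), uses the same increment bound $\delta_t - \delta_{t-1} \le \|x_t - y_t\| \le a_t \delta_{t-1} + b_t + \|u_t\|$ with $u_t = y_t - h_t(y_{<t})$, and unrolls the resulting recursion in the same way. Your explicit mention of $\sqrt{u^2+v^2} \le u+v$ is exactly the inequality the paper uses implicitly when writing $\delta_t - \delta_{t-1} \le \|x_t - y_t\|$.
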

\begin{proof} 
The sequence $y$ can be generated by \eqref{eq:ht} using the control term $u_t  =  y_t  - h_t(y_{<t})$ such that $y_{\le t} = \phi_{\le t}(u_{\le t})$. Define $\delta_t = \norm{ x_{\le t} - y_{\le t}}$.  Then, $\delta_1 = \norm{ x_1 - y_1} = \norm{ u_1}$ since $h_1$ is a constant function. For $t \ge 2$, we have
\begin{align*}
\delta_t - \delta_{t-1}  & \le   \norm{ x_{ t} - y_{ t}  } \\
&  =   \norm{h_t(x_{<t} )  - h_t(y_{<t}) - u_t}  \\
& \le  \norm{h_t(x_{<t} )  - h_t(y_{<t})} + \norm{u_t} \\
& \le a_t  \, \delta_{t-1}  + b_t + \norm{u_t} .
\end{align*}
Summing the increments, up to time $t$, we find that
\begin{align*}
\delta_t & \le  \sum_{s =1}^{t-1} \Big(  \prod_{r =s+1}^{t}  (1 + a_r)  \Big) ( b_s+ \norm{u_s})  +  b_t + \norm{u_t} \\
& \le  \Big(  \prod_{r =2}^{t}  (1 + a_r)  \Big)   \sum_{s =1}^{t} ( b_s+ \norm{u_s}),
\end{align*}
Evaluating at $t = T$ completes the proof. 
\end{proof}

\subsection{Proof of Lemma~\ref{lem:SE}}

The debiasing coefficients can be expressed as
\begin{align*}
b_{st} = \frac{1}{n}  \Sigma^{-1}  \ex{ W^\top f(y_{\le t} )}
\end{align*}
where $e_s$ is the $s$-th standard basis vector in $\bbR^T$. Letting $w =\gvec(W)$ we can write
\begin{align*}
e_s^\top \Sigma^{-1} W^\top = w^\top ( \Sigma^{-1} e_s \otimes \Id_n)
\end{align*}
and this leads to 
\begin{align*}
b_{st} 
& = \frac{1}{n} \ex*{w^\top ( \Sigma^{-1} e_s \otimes \Id_n) f_t((y_{<t})}\\
& = \frac{1}{n} \ex*{ \gtr\left( f_t((y_{<t})  w^\top ( \Sigma^{-1} e_s \otimes \Id_n)  \right) }\\
& = \frac{1}{n}  \gtr\left( \cov(  f_t((y_{<t}) , w  ) ( \Sigma^{-1} e_s  \otimes \Id_n) \right)
\end{align*}
Similar to the proof Theorem~\ref{thm:XY} we can express $y_t$ as a function of $w_{\le t}$ according to the mapping $\phi_{t} \colon (\bbR^n)^t \to \bbR^n$, which is weakly differentiable by the inverse mapping theorem.  By Stein's lemma, 
\begin{align*}
\cov(  f_t((y_{<t}) , w  )  & =  \cov(f_t(\phi_{<t}(w_{<t})) , w ) = \ex*{ \frac{ \dd  f_t  }{\dd  w} } \cov( w)  
\end{align*}
Noting that $\cov(w) = \Sigma \otimes \Id_n$  then leads to 
\begin{align*}
b_{st}  & = \frac{1}{n} \gtr\left(  \ex*{ \frac{ \dd  f_t  }{\dd  w} } ( e_s \otimes \Id_n) \right)   =  \frac{1}{n} \gtr \left(  \ex*{ \frac{ \dd f_t  }{\dd  w_s} } \right) 
\end{align*}
Thus, we have recovered the expression used for the standard specification of the AMP algorithm in \eqref{eq:bst_amp}. 

Next,  we show how the partial derivative can be expressed with respect to the process  $(y_t)$.   Let $y = \gvec(Y)$, $m = \gvec(M(Y))$, and $f = \gvec(F(Y))$. Differentiating both side of the equation  $y = m + w$ and applying the chain rule leads to 
\begin{align*}
\frac{\dd y}{ \dd w}  
 =  \frac{ \dd  m}{ \dd w}    + \Id \quad \iff \quad \frac{\dd y}{ \dd w}  
 =  \frac{ \dd  m}{ \dd y} \frac{ \dd  y}{ \dd w}    + \Id  \quad \iff \quad   \left( \Id -  \frac{ \dd  m}{ \dd y} \right) \frac{\dd y}{ \dd w} = \Id
 \end{align*}
The fact that $ \frac{ \dd  m}{ \dd y}$ is a strictly lower triangular block matrix ensures that $\Id -  \frac{ \dd  m}{ \dd y}$ is nonsingular, and thus 
\begin{align*}
\frac{\dd y}{ \dd w} & = \left( \Id -  \frac{ \dd  m}{ \dd y} \right)^{-1}  =  \Id +  \left( \Id -  \frac{ \dd  m}{ \dd y} \right)^{-1} \frac{ \dd  m}{ \dd y}
\end{align*}
Combining with the chain rule,  $\frac{ \dd f_t  }{\dd  w}  = \frac{ \dd f_t}{ \dd y} \frac{ \dd y}{ \dd w} $ it follows that
\begin{align*}
b_{st}  
& = \frac{1}{n}  \ex*{  \gtr\left(  \frac{\dd  f_t }{ \dd y_s}  \right)}  + \frac{1}{n}  \ex*{  \gtr\left(  \frac{\dd  f_t }{ \dd y }  \left( \Id -  \frac{ \dd  m}{ \dd y} \right)^{-1}  \frac{ \dd  m}{ \dd y_s}  \right)   } .
 \end{align*}
Finally,  the block triangular structure of $\frac{ \dd  m}{ \dd y}$  ensures that $y$ and $m$  can be replaced with $y_{\le t}$ and $m_{\le t}$, and this gives the expression in \eqref{eq:bts_stein}.

\bibliography{gfom_coupling.bbl}


\end{document}